\documentclass{article}

\usepackage[final]{corl_2026}

\usepackage{graphicx}
\usepackage{subcaption}
\usepackage{amsmath}
\usepackage{amssymb}
\usepackage{amsthm}
\usepackage{booktabs}
\usepackage{cleveref}
\usepackage{algpseudocode}
\usepackage{enumitem}
\usepackage{algorithmicx,algorithm}
\usepackage{etoc}
\usepackage{float}
\usepackage{wrapfig}
\newtheorem{mythm}{\bf Theorem}
\newtheorem{myprob}{\bf Problem}
\newtheorem{mylem}{\bf Lemma}
\newtheorem{mycol}{\bf Corollary}
\newtheorem{myprop}{\bf Proposition}

\newtheorem{remark}{Remark}

\title{CoSMO: Conformal Semantic Mapping for Open-World Safe Robot Navigation}

\author{
  Shuqi Wang$^{1}$ \quad
  Yiwei Zheng$^{1}$ \quad
  Pingrui Zhang$^{3}$ \quad
  Yue Gao$^{1,2}$ \quad
  Xiang Yin$^{1,\dagger}$\\[3pt]
  $^{1}$Shanghai Jiao Tong University \quad
  $^{2}$Shanghai Innovation Institute \quad
  $^{3}$Fudan University\\
}

\begin{document}
\etocdepthtag.toc{main}
  \maketitle


\begin{abstract}
Safe autonomous navigation requires not only geometric obstacle avoidance but also semantic awareness. Different object categories demand different safety clearances, and the set of relevant categories may expand as the robot explores novel environments.  
Vision-language models (VLMs) offer rich semantic representations for building maps of unknown scenes, yet their confidence scores are poorly calibrated, making raw predictions unreliable for safety-critical planning. 
Conformal prediction (CP) can provide coverage guarantees, but existing methods assume a fixed label set and stationary data, limiting their use in open-world settings.
We propose Conformal Semantic Mapping for Open-World Safe Robot Navigation (CoSMO), a semantic safety mapping framework that transforms VLM embeddings into online-calibrated coverage sets and safety-aware obstacle dilations under distribution shift. By calibrating directly in the embedding space, CoSMO avoids the score rescaling that arises in confidence calibration as the vocabulary expands.
Experiments on the Matterport3D (MP3D) dataset show that CoSMO reduces semantic collisions by 61.6\% relative to a geometry-only baseline, while maintaining formal coverage guarantees even under delayed expert feedback. Real-world experiments on a quadruped robot further demonstrate generalization by achieving long-run empirical coverage of $90.73\%$ against the $90\%$ target. Project website: \url{https://yangming911.github.io/cosmo-page/}.
\end{abstract}

\keywords{Robot Navigation, Robot Safety, Formal Guarantees, Conformal Prediction}


\section{Introduction}

A domestic robot navigating an unfamiliar home must avoid hazards based on what an object is, not merely where it is. For example, it should keep a large clearance from a moving person, a moderate margin from a slippery bathtub, and only geometric clearance from a sofa, where incidental contact is relatively harmless. Thus, safe navigation is inherently a semantic problem: safety requirements vary across object categories. 
In practice, a robot deployed in novel environments cannot anticipate every hazard category it will encounter, making this an open-world challenge~\cite{openworld} where the agent must discover and incorporate new categories along with their safety requirements during deployment, while maintaining safe operation throughout.

Navigating unknown environments requires building a spatial representation online. Classical approaches rely on geometric maps that capture obstacle locations but not their semantics. Recent methods leverage vision-language models (VLMs) to build semantic maps and search for goals in unfamiliar environments. However, VLM confidence scores are often poorly calibrated~\cite{tu2024empirical}, making raw predictions unreliable for safety-critical decisions.  

Conformal prediction (CP)~\cite{shafer2008tutorial} offers a principled way to calibrate predictive uncertainty. It has recently been widely adopted for uncertainty quantification in neural vision models for image classification~\cite{romano2020classification,silvarodriguez2025conformal}.
The most relevant prior work~\cite{sundarsingh2026safe} constructs conformalized semantic maps in safe planning. However, these methods rely on a \emph{closed-set} and \emph{offline} calibration setting. Deploying CP in open-world navigation introduces new challenges that this design cannot address:
\begin{itemize}
    \item[\textbf{C1.}] 
    The robot may encounter a previously unseen label. However, existing conformal semantic maps typically define nonconformity scores from probabilities (e.g., softmax scores) over a fixed label set. Adding a new label changes the normalization and makes previously computed scores incompatible with the old calibrated threshold.
    \item[\textbf{C2.}] 
     The distribution shifts as the robot moves between visually distinct scenes (e.g., bedroom to kitchen) and as its own safety policy alters the visited regions.  A static threshold cannot track these changes, necessitating online recalibration such as Adaptive Conformal Inference (ACI)~\citep{gibbs2021adaptive}. However, unlike offline calibration on split datasets where ground-truth labels are freely available, reliable labels are difficult to obtain in real time during deployment.
\end{itemize}

\begin{figure}[t]
    \centering
    \includegraphics[width=\columnwidth]{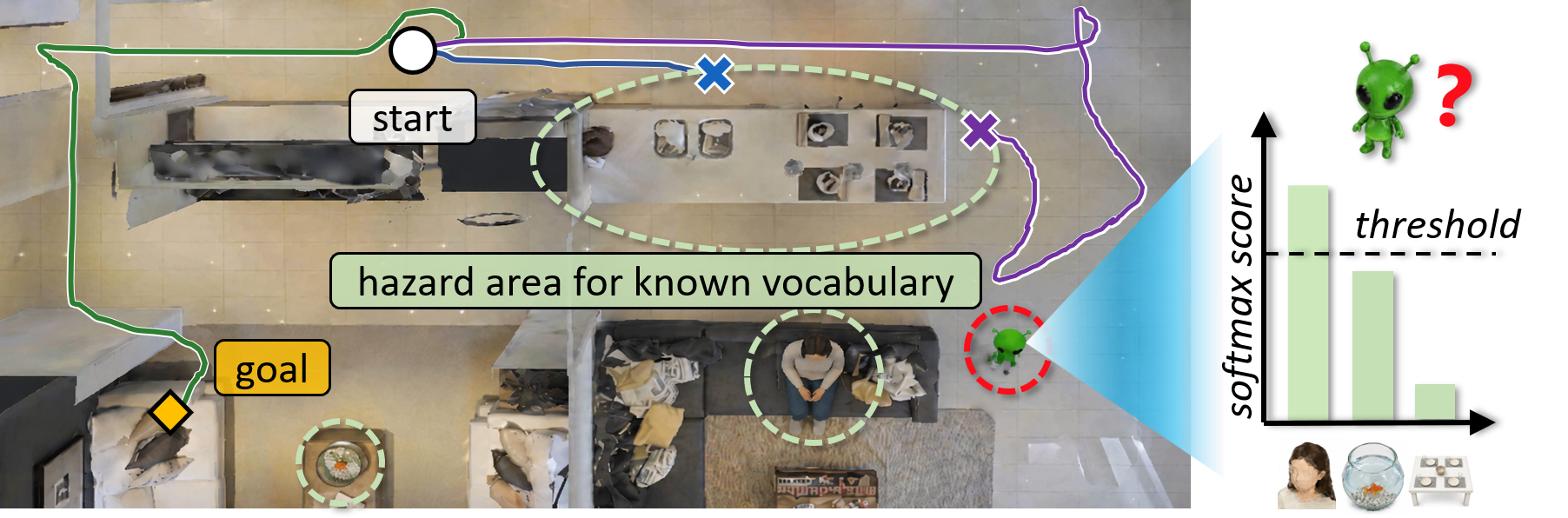}
    \caption{\textcolor[rgb]{0.4,0.0,0.6}{Classical ZSON pipeline}~\citep{busch2025onemap} navigates with no semantic safety awareness and collides. \textcolor[rgb]{0.0,0.2,0.9}{VLM-only Obstacle Avoidance} considers the single most likely label and still enters the semantic hazard zone when misclassified. \textcolor[rgb]{0.0,0.4,0.0}{CoSMO} calibrates coverage sets over the visual embedding, accounting for multiple possible labels, successfully reaching the goal. Closed-set offline conformal prediction breaks when a previously unseen object appears during navigation.}
    \label{fig:senario}
\end{figure}

To address the above challenges, we propose Conformal Semantic Mapping for Open-World Safe Robot Navigation (CoSMO) with three new mechanisms: an embedding-space nonconformity score, an online-maintained safety dictionary, and delayed expert feedback for dictionary expansion and online calibration.
For \textbf{C1}, we define nonconformity scores as distances in the VLM's continuous embedding space, and the agent maintains a dynamically expanding safety dictionary: when the expert identifies a new category, its text embedding is projected into the same space and its safety knowledge is registered in the dictionary. Because distances are structurally independent of vocabulary size, adding new entries never invalidates previously computed scores or the calibrated threshold.
For \textbf{C2}, we update the threshold online via ACI to track score distribution shifts. We employ a cloud-hosted larger model as an expert oracle to provide calibration labels. This introduces a feedback delay. We further extend ACI to the delayed setting, proving that formal coverage guarantees are preserved under bounded delay at a quantifiable tracking cost.
CoSMO can be integrated with existing VLM-based navigation pipelines through three interfaces: (i) a dense embedding vision-language encoder, (ii) a planner that consumes a dilated obstacle map, and (iii) an expert oracle.
%

We demonstrate CoSMO within a zero-shot object navigation (ZSON) 
pipeline~\citep{busch2025onemap}, whose online dense mapping naturally fits CoSMO's integration, and whose diverse object categories provide a realistic testbed for open-world semantic safety. Experiments on 2195 Matterport3D episodes show that CoSMO reduces semantic collisions by 61.6\% relative to a geometry-only ZSON baseline in a fully open-vocabulary setting where every hazard category is discovered online from an initially incomplete dictionary. We further evaluate CoSMO on a Unitree Go2 robot in 30 trials across two indoor scenes with free-form semantic categories. GCLIP Argmax achieves $58.06\%$ top-1 obstacle-label accuracy, whereas CoSMO's prediction sets achieve long-run empirical true-label coverage of $90.73\%$ under delayed Grounding-DINO feedback, meeting the $90\%$ target and demonstrating generalization to physical deployment.

\section{Problem Formulation}
\label{subsec:problem}
The environment is represented as a semantic grid map of $M$ cells indexed by $\mathcal{J} = \{1, \dots, M\}$, where each cell $j$ is centered at $c_j \in \Omega$ and carries a \textit{ground-truth semantic label} $l_j^{(t)} \in \mathcal{V}$ at time $t$. A \emph{semantic safety function} $D_{\text{safe}}: \mathcal{V} \to \mathbb{R}_{\ge 0}$ maps each label to a required minimum clearance. For the agent's current position $x_t$, the robot is \textit{semantically safe} w.r.t.\ cell $j$ if $\| x_t - c_j \|_2 \ge D_{\text{safe}}(l_j^{(t)})$. 
We denote by $\mathcal{N}_r(x_t) \subseteq \mathcal{J}$ the \emph{safety-relevant neighborhood}: the set of cells that the robot could physically interact with within one control step from position $x_t$. Beyond  avoiding occupied cells, the robot must maintain \emph{semantically appropriate} safety margins that depend on the \emph{type} of obstacle.

\begin{myprob}
The robot must navigate to a target object goal while satisfying the following per-step semantic safety condition:
\begin{equation}
    \bigwedge_{j \in \mathcal{N}_r(x_t)} \left( \| x_t - c_j \|_2 \ge D_{\text{safe}}(l_j^{(t)}) \right).
    \label{eq:chance_constraint}
\end{equation}
\end{myprob}

Since the true label $l_j^{(t)}$ is not directly available, the agent must infer it from a VLM-based perception module, whose outputs are unreliable and stochastic. Therefore, raw VLM predictions cannot be directly trusted for safety-guaranteed avoidance.

\begin{figure}[t]
    \centering
    \includegraphics[width=0.9\columnwidth]{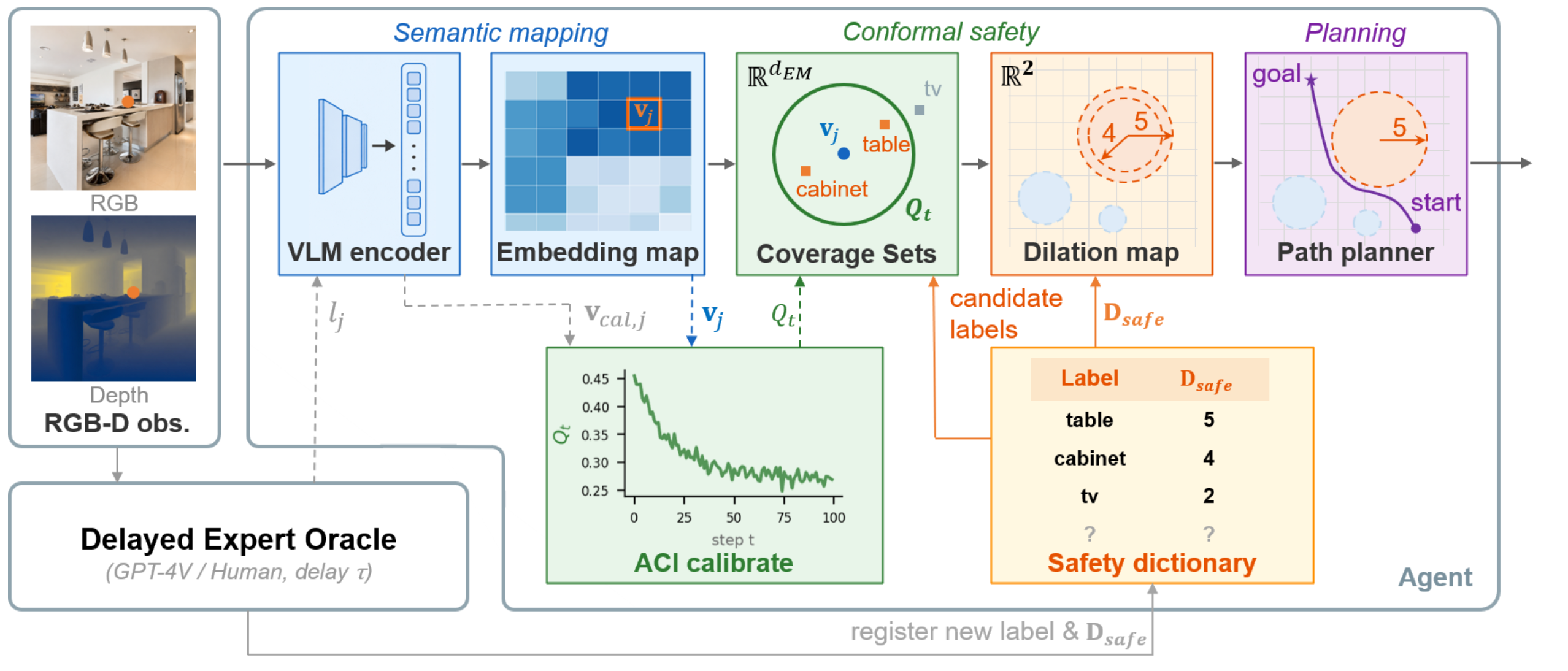}
    \caption{CoSMO module for open-world semantic safety. CoSMO takes an embedding-based semantic map as input, constructs conformal coverage sets, maintains the semantic dilation map, updates the adaptive threshold via open-vocabulary ACI, and expands the safety dictionary using delayed expert feedback. The updated dilation map serves as a safety layer for downstream planning.}
    \label{fig:framework}
\end{figure}

\section{Methodology}
\label{sec:system}
CoSMO takes as input an embedding-based semantic map, the current safety dictionary, and delayed expert feedback. It performs four coupled operations: (i) constructing conformal coverage sets for safety-relevant cells, (ii) updating the semantic obstacle dilation map using worst-case safety radii, (iii) calibrating the adaptive threshold via ACI using delayed nonconformity scores, and (iv) maintaining an open-vocabulary safety dictionary as new labels are revealed. The downstream planner then consumes the resulting dilated obstacle map to produce safety-aware navigation actions. Figure~\ref{fig:framework} summarizes this module-level integration.


\paragraph{Embedding-Based Semantic Map Interface}
\label{subsec:mapping}

We construct a top-down grid map where each cell maintains a high-dimensional semantic feature vector. Let $\phi: \mathbb{R}^{H \times W \times 3} \to \mathbb{R}^{H' \times W' \times d_{\text{em}}}$ be the dense image encoder of a pre-trained VLM (e.g., CLIP), which maps the RGB image $I_t$ to a spatial feature map of $d_{\text{em}}$-dimensional patch embeddings. Let $\psi: \mathcal{V} \to \mathbb{R}^{d_{\text{em}}}$ be the corresponding text encoder, mapping a semantic label to the same embedding space. At each time step, the dense visual features $\phi(I_t)$ are computed and back-projected onto the grid using camera intrinsics and the depth map $Z_t$. Using the depth map $Z_t$, the agent distinguishes free space from occupied cells. For every grid cell $j$ observed by this back-projection at time $t$, the map is updated as
\begin{equation}
    \mathbf{v}_j^{(t)} = \phi(I_t)_j,
    \label{eq:vlm_update}
\end{equation}
where $\phi(I_t)_j \in \mathbb{R}^{d_{\text{em}}}$ is the visual embedding assigned to cell $j$. The map stores continuous feature vectors rather than discrete label predictions, so \textit{the map never needs to be re-rendered when the vocabulary changes}---new labels are simply queried via $\psi$ on the existing features map.

\paragraph{Delayed Expert Oracle}
\label{subsec:expert}

The agent runs a lightweight, edge-deployed VLM for real-time inference, but such models have limited generalization in open-world scenes. To make the resulting VLM feature map usable for safety-critical planning, we   continuously \emph{calibrate the agent's uncertainty and teach it new safety-relevant categories} during deployment. We  leverage a heavyweight cloud-hosted model (e.g., GPT-4V) or a human-in-the-loop verifier as a \emph{delayed expert oracle}. 
Formally, the expert oracle $f_{\text{expert}}: \mathcal{N}_r(x_t) \to (\mathcal{V} \times \mathbb{R}_{\ge 0})^{|\mathcal{N}_r(x_t)|}$ returns, for the robot's neighborhood at time $t$, the ground-truth semantic labels and safety distances $\{(l_j^{(t)},\, D_{\text{safe}}(l_j^{(t)}))\}_{j \in \mathcal{N}_r(x_t)}$ after a bounded delay $\tau \in \mathbb{N}$. That is, at time $t + \tau$, the label--distance pairs \emph{as they were at time $t$} are revealed. 

Note that, the delayed labels are \emph{not} used to correct individual cell embeddings. Instead, the agent compares the expert labels with its stored embeddings $\{\mathbf{v}_j^{(t-\tau)}\}$ to compute retrospective nonconformity scores for calibration. Thus, the expert oracle plays two roles: it expands the safety dictionary with newly discovered labels and safety distances, and it provides delayed calibration data for online threshold adaptation.
Also, the perfect ground-truth oracle assumption is needed only for the formal coverage guarantees. 
In practice, the oracle can be implemented by a strong open-set detector. 
For example, Grounding-DINO~\citep{liu2024groundingdino} yields performance close to the ground-truth oracle, which is illustrated in \Cref{tab:main_results}.

\paragraph{Open-Vocabulary Safety Dictionary}
\label{subsec:open_vocab}

Upon receiving ground-truth labels from the expert oracle, the agent maintains an obstacle safety dictionary $(\mathcal{V}_t, D_{\text{safe}})$ that starts empty ($\mathcal{V}_0 = \emptyset$) and is expanded online. When the expert identifies a label $l_{\text{new}} \notin \mathcal{V}_t$ for the first time, the text embedding $\psi(l_{\text{new}})$ is computed and appended to the obstacle embedding matrix, the safety distance $D_{\text{safe}}(l_{\text{new}})$ is recorded from the expert, and the dictionary is updated via $\mathcal{V}_{t+1} \leftarrow \mathcal{V}_t \cup \{l_{\text{new}}\}$.
This expansion is \textit{strictly additive}: once a semantic concept and its safety margin are registered, they are immutable. This monotonicity ensures that the robot's navigable space can only become more conservative (or stay the same) when new hazards are discovered, preventing safety regression.


\paragraph{Open-vocabulary Adaptive Conformal Inference}
\label{subsec:aci}
Standard VLM predictions are noisy and poorly calibrated, making raw similarity scores unreliable for safety-critical planning. CoSMO enforces the per-step safety condition~\eqref{eq:chance_constraint} by turning an embedding-based semantic map into conformal coverage sets and then into a conservative semantic dilation map. The key idea is to define nonconformity scores as distances in the VLM's embedding space rather than softmax-normalized probabilities over a fixed label set, so that calibration becomes structurally independent of vocabulary size. We first define the scoring mechanism, then present the online calibration procedure that produces the adaptive threshold $Q_t$, and finally describe how $Q_t$ induces coverage sets and safety-critical planning.

The threshold $Q_t$ is calibrated online using the ground-truth labels provided by the expert oracle via Adaptive Conformal Inference (ACI)~\citep{gibbs2021adaptive}. We adopt a pairwise embedding-space dissimilarity $s(\mathbf{v}, \mathbf{v}_{\text{cal}})$ (e.g., cosine distance) as the per-cell nonconformity score, where $\mathbf{v}_{\text{cal},j}^{(t)} = \psi(l_j^{(t)})$ is the text embedding of the ground-truth label of cell $j$. Because $s$ depends only on the pair $(\mathbf{v}, \mathbf{v}_{\text{cal}})$, adding a new label to $\mathcal{V}_t$ does not alter any previously computed score, so the score is \emph{vocabulary-agnostic}. We aggregate $s$ over the robot's neighborhood into the worst-case scalar $S_t = \max_{j \in \mathcal{N}_r(x_t)} s(\mathbf{v}_j^{(t)},\, \mathbf{v}_{\text{cal},j}^{(t)})$, which inherits the same vocabulary-agnosticity.

The robot maintains a risk parameter $\alpha_t$ that governs how conservative the coverage sets are. At time $t$, the expert oracle delivers the ground-truth labels $\{l_j^{(t-\tau)}\}_{j \in \mathcal{N}_r(x_{t-\tau})}$ for all cells in the neighborhood observed at time $t - \tau$. The agent forms the retrospective neighborhood score $S_{t-\tau} = \max_{j \in \mathcal{N}_r(x_{t-\tau})} s(\mathbf{v}_j^{(t-\tau)},\, \mathbf{v}_{\text{cal},j}^{(t-\tau)})$ and runs the ACI update
\begin{equation}
    \text{err}_t = \mathbb{I}(S_{t-\tau} > Q_t),\quad
    \alpha_{t+1} = \alpha_t + \gamma(\alpha - \text{err}_t),\quad
    Q_{t+1} = \hat{Q}(1 - \alpha_{t+1}),
    \label{eq:aci_update}
\end{equation}
where $\gamma > 0$ is the step size, $\alpha$ is the target miscoverage rate, and $\hat{Q}(q)$ is the $q$-th empirical quantile of historical neighborhood nonconformity scores. The dynamics form a negative feedback loop: when all cells in the neighborhood are covered ($\text{err}_t = 0$), $\alpha_t$ increases, $Q_t$ decreases, and coverage sets shrink; when any cell's coverage fails ($\text{err}_t = 1$), $\alpha_t$ decreases, $Q_t$ increases, and coverage sets expand. This drives long-run neighborhood coverage toward $1 - \alpha$.

\paragraph{Coverage Sets and Safety-Critical Planning}
\label{subsec:conf_set}

Given the adaptive threshold $Q_t$ from ACI, the coverage set for cell $j$ is
\begin{equation}
    \mathcal{C}_j^{(t)} = \left\{ l \in \mathcal{V}_t \;\middle|\; s(\mathbf{v}_j^{(t)}, \psi(l)) \le Q_t \right\}.
    \label{eq:conf_set}
\end{equation}
For $\mathcal{C}_j^{(t)} \neq \emptyset$, CoSMO assigns the worst-case semantic safety distance $d_{\text{wc}}^{(j,t)} = \max_{l \in \mathcal{C}_j^{(t)}} D_{\text{safe}}(l)$. When $\mathcal{C}_j^{(t)} = \emptyset$, CoSMO sets $d_{\text{wc}}^{(j,t)} = 0$, applies no semantic dilation to that cell, and relies on the base geometric obstacle map. This case occurs before the true label has entered $\mathcal{V}_t$, or when no discovered label lies within the calibrated threshold.

By construction of $S_t$, the event $\{S_t \le Q_t\}$ is equivalent to $\{l_j^{(t)} \in \mathcal{C}_j^{(t)} \text{ for all } j \in \mathcal{N}_r(x_t)\}$. Whenever this event holds, $d_{\text{wc}}^{(j,t)} = \max_{l \in \mathcal{C}_j^{(t)}} D_{\text{safe}}(l) \ge D_{\text{safe}}(l_j^{(t)})$ for every cell, so planning against $d_{\text{wc}}^{(j,t)}$ enforces the per-step safety condition~\eqref{eq:chance_constraint}. The role of the ACI update is to keep $Q_t$ large enough that this current-time coverage event holds frequently, even though calibration only sees the delayed score $S_{t-\tau}$ rather than $S_t$; quantifying how delay and non-stationarity affect this guarantee is the subject of \Cref{sec:analysis}.

The output of CoSMO is an obstacle map in which each obstacle cell is dilated by $d_{\text{wc}}^{(j, t)}$. One can feed this map into a standard graph-based planner that returns safe trajectories. 


\section{Theoretical Analysis}
\label{sec:analysis}

We provide theoretical guarantees for CoSMO. These guarantees apply in the post-discovery regime, where the true label of every safety-relevant cell lies in the current vocabulary $\mathcal{V}_t$. 
Proofs are given in Appendix~\ref{app:proofs}. 

\subsection{Coverage Guarantees}

Under delayed feedback, the coverage error at time $t$ is evaluated using the \emph{current} threshold $Q_t$ against the delayed neighborhood score $S_{t-\tau}$. Because the ACI update  has the same algebraic form regardless of \emph{which} neighborhood is evaluated, 
one can derive the following result similar to the standard ACI guarantee~\citep{gibbs2021adaptive}.

\begin{myprop}[\textbf{Asymptotic Calibration under Delay}]
\label{prop:coverage_delay}
For all $T \ge 1$, it holds that
$$\left| \frac{1}{T} \sum_{t=1}^T \emph{err}_t - \alpha \right| \le \frac{\max\{\alpha_1,\, 1-\alpha_1\} + \gamma}{T\gamma}.$$
\end{myprop}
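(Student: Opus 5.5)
The argument follows Gibbs and Candès: telescope the recursion and show that $\alpha_t$ stays in a bounded interval. The delay plays no role. Whatever neighborhood produced $S_{t-\tau}$, $\text{err}_t$ is still an indicator in $\{0,1\}$. The update \eqref{eq:aci_update} is therefore the same scalar recursion $\alpha_{t+1}=\alpha_t+\gamma(\alpha-\text{err}_t)$, and only two properties of $\text{err}_t$ are used. First, it is binary. Second, it is forced to equal $0$ or $1$ when $\alpha_t$ is extreme, as described below.

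\textbf{Step 1 (telescoping).} Summing the recursion over $t=1,\dots,T$ gives $\alpha_{T+1}-\alpha_1=\gamma\sum_{t=1}^T(\alpha-\text{err}_t)$. Hence
\[
\left|\frac1T\sum_{t=1}^T\text{err}_t-\alpha\right|=\frac{|\alpha_{T+1}-\alpha_1|}{T\gamma}.
\]
It therefore suffices to show $|\alpha_{T+1}-\alpha_1|\le\max\{\alpha_1,1-\alpha_1\}+\gamma$.

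\textbf{Step 2 (boundedness).} We prove by induction that $\alpha_t\in[-\gamma,1+\gamma]$ for all $t$. This rests on the convention for $\hat Q$ at extreme levels.
\begin{itemize}
\item If $\alpha_t<0$, then $1-\alpha_t>1$. We take $\hat Q(q)=+\infty$ for $q\ge 1$ (equivalently, the set includes all labels), so $Q_t=\infty$ and $\text{err}_t=0$. Then $\alpha_{t+1}=\alpha_t+\gamma\alpha>\alpha_t$, so the iterate moves upward.
\item If $\alpha_t>1$, then $\hat Q(1-\alpha_t)=-\infty$ (the empty set), so $\text{err}_t=1$. Then $\alpha_{t+1}=\alpha_t-\gamma(1-\alpha)<\alpha_t$.
\item If $\alpha_t\in[0,1]$, a single step changes $\alpha_t$ by at most $\gamma\max\{\alpha,1-\alpha\}\le\gamma$. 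So $\alpha_{t+1}\in[-\gamma,1+\gamma]$.
\end{itemize}
Combining these three cases, once the iterate is in $[-\gamma,1+\gamma]$ it never leaves. The base case is $\alpha_1\in[0,1]$.

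\textbf{Step 3 (conclusion).} With $\alpha_1\in[0,1]$ and $\alpha_{T+1}\in[-\gamma,1+\gamma]$,
\[
|\alpha_{T+1}-\alpha_1|\le\max\{\alpha_1+\gamma,\;1+\gamma-\alpha_1\}=\max\{\alpha_1,1-\alpha_1\}+\gamma.
\]
Substituting into Step 1 gives the claimed bound.

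\textbf{Main obstacle.} The delicate point is Step 2. The paper defines $\hat Q$ only as an empirical quantile, so I would state the convention for levels outside $[0,1]$ explicitly. Specifically, $\hat Q(q)=+\infty$ for $q\ge1$ and $\hat Q(q)=-\infty$ for $q\le0$. Under this convention $\text{err}_t$ is deterministic at the extremes. It also means $Q_t=\infty$ yields $\text{err}_t=0$ even though the score is the delayed $S_{t-\tau}$, since any finite score lies below $\infty$.

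I would also remark that the pre-discovery regime does not affect this step. In that regime the sets may be empty, but $\text{err}_t$ depends only on $S_{t-\tau}$ and $Q_t$, not on $\mathcal V_t$. The bound is then deterministic and holds for arbitrary, even adversarial, score sequences and any bounded delay $\tau$.
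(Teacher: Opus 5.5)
Your proposal is correct and follows essentially the same route as the paper: you show $\alpha_t\in[-\gamma,1+\gamma]$ via the convention that $Q_t=+\infty$ forces $\text{err}_t=0$ when $\alpha_t<0$ and $Q_t=-\infty$ forces $\text{err}_t=1$ when $\alpha_t>1$, and then you telescope the recursion. Your explicit three-case induction, and your bound $|\alpha_{T+1}-\alpha_1|\le\max\{\alpha_1,1-\alpha_1\}+\gamma$ obtained from the interval containment, are more carefully justified than the paper's contradiction argument and its appeal to $|\alpha_{T+1}|\le 1+\gamma$ and $|\alpha_1|\le 1$, which on its own would only give $2+\gamma$.
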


Note that, the above bound requires \emph{no distributional assumption} and contains \emph{no dependence on $\tau$}.
To bridge calibration coverage (on delayed neighborhoods) to planning-time coverage (on current neighborhoods), we introduce two regularity conditions. Let $F_t(q) \triangleq \mathbb{P}(S_t \le q)$ denote the CDF of the neighborhood nonconformity score at time $t$. We further make the following two assumptions:\vspace{-6pt}
\begin{itemize}
    \item[\textbf{A1}] \textbf{(Bounded Score Distribution Shift)} 
    There exists $\Delta \ge 0$ such that $\sup_{q \ge 0} |F_{t+1}(q) - F_t(q)| \le \Delta$ for all $t \ge 1$.
    
    \item[\textbf{A2}] \textbf{(Bounded Conditional Bias)} 
    There exists $\epsilon \ge 0$ such that for all $t \ge 1$:
$|\mathbb{P}(S_{t-\tau} > Q_t \mid \mathcal{F}_{t-1}) - (1 - F_{t-\tau}(Q_t))| \le \epsilon$ and
$|\mathbb{P}(S_t > Q_t \mid \mathcal{F}_{t-1}) - (1 - F_t(Q_t))| \le \epsilon$,
where $\mathcal{F}_{t-1} = \sigma(\mathrm{err}_1, \dots, \mathrm{err}_{t-1})$.
\end{itemize}

\begin{myprop}[\textbf{Planning-Time Coverage under Delay}]
\label{prop:tracking}
Under Assumptions \textbf{A1} and \textbf{A2} and the delayed expert oracle, the expected planning-time miscoverage satisfies
\begin{equation}
    \limsup_{T \to \infty}\;\left| \frac{1}{T}\sum_{t=1}^{T} \bigl(1 - F_t(Q_t)\bigr) - \alpha \right| \le \tau \Delta + \epsilon.
    \label{eq:tracking_bound}
\end{equation}
\end{myprop}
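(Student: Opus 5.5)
The plan is to decompose the planning-time miscoverage $1-F_t(Q_t)$ into three pieces: the realized calibration error $\mathrm{err}_t$, a martingale-difference term, and a drift term coming from the delay. Concretely, for each $t$ write
\begin{equation*}
1-F_t(Q_t) - \alpha = \bigl(\mathrm{err}_t - \alpha\bigr) - \bigl(\mathrm{err}_t - \mathbb{P}(S_{t-\tau}>Q_t\mid\mathcal{F}_{t-1})\bigr) + \bigl(\mathbb{P}(S_{t-\tau}>Q_t\mid\mathcal{F}_{t-1}) - (1-F_{t-\tau}(Q_t))\bigr) + \bigl(F_{t-\tau}(Q_t)-F_t(Q_t)\bigr).
\end{equation*}
Averaging over $t=1,\dots,T$ and taking absolute values, the four terms are handled separately. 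The first average is bounded by Proposition~\ref{prop:coverage_delay}, giving $O(1/(T\gamma))$, which vanishes as $T\to\infty$. The third term is at most $\epsilon$ pointwise by the first half of Assumption \textbf{A2}. The fourth term is at most $\tau\Delta$ pointwise: telescope $F_{t-\tau}(Q_t)-F_t(Q_t)=\sum_{k=1}^{\tau}\bigl(F_{t-k}(Q_t)-F_{t-k+1}(Q_t)\bigr)$ and apply \textbf{A1} to each summand. Since \textbf{A1} is a supremum over $q$, it holds at the random point $Q_t$. For $t\le\tau$ I would adopt a convention for $S_{t-\tau}$, for example no update or an initial score. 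Only finitely many terms are affected, so they contribute $O(\tau/T)\to 0$.

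The main obstacle is the second term, $\frac1T\sum_t\bigl(\mathrm{err}_t-\mathbb{P}(S_{t-\tau}>Q_t\mid\mathcal{F}_{t-1})\bigr)$. The summands are bounded by $1$. I would first try to treat them as martingale differences with respect to $\mathcal{F}_t$. $Q_t$ is $\mathcal{F}_{t-1}$-measurable, since it is a function of $\alpha_t$, which is determined by $\mathrm{err}_1,\dots,\mathrm{err}_{t-1}$, together with the historical scores, which I would include in the filtration if needed. Then $D_t=\mathrm{err}_t-\mathbb{E}[\mathrm{err}_t\mid\mathcal{F}_{t-1}]$ is a bounded martingale difference sequence. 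Azuma--Hoeffding, or the strong law for bounded martingale differences, gives $\frac1T\sum_t D_t\to 0$ almost surely, so this term drops out of the $\limsup$.

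Alternatively, since the statement concerns the expected miscoverage, I could take expectations of the whole decomposition. The second term then has mean zero exactly by the tower property. One would then bound $|\mathbb{E}[\frac1T\sum_t\mathrm{err}_t]-\alpha|$ using Proposition~\ref{prop:coverage_delay}, which holds deterministically. Either route leaves $\tau\Delta+\epsilon$ in the limit. The second half of \textbf{A2}, about $S_t$, is not needed for this bound. It would be used only to translate the result into a statement about realized planning-time errors $\mathbb{I}(S_t>Q_t)$, and I would remark on that at the end.
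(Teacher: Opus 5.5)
Your argument is correct up to one slip, and it is essentially the paper's proof. Your four brackets are controlled exactly as in the paper: Proposition~\ref{prop:coverage_delay}, the martingale law of large numbers of Lemma~\ref{lem:martingale}, the first half of \textbf{A2} (inside Lemma~\ref{lem:cond_marginal}), and the telescoped \textbf{A1} bound of Lemma~\ref{lem:calib_plan_gap}, combined by the triangle inequality.

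\textbf{Minor corrections.} The slip is that the third bracket in your displayed identity must enter with a minus sign. As written, the four terms do not sum to $1-F_t(Q_t)-\alpha$, but the bound is unaffected because you only use that bracket's absolute value. Second, your expectation-based alternative only controls $\bigl|\mathbb{E}\bigl[\frac{1}{T}\sum_t (1-F_t(Q_t))\bigr]-\alpha\bigr|$, which is weaker than the almost-sure pathwise statement your martingale route and the paper's proof deliver. Third, keep $\mathcal{F}_{t-1}=\sigma(\mathrm{err}_1,\dots,\mathrm{err}_{t-1})$ throughout. The martingale-difference property holds for this filtration without any measurability of $Q_t$, whereas an enlarged filtration would no longer match \textbf{A2} as stated.
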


The bound decomposes into a temporal gap $\tau\Delta$ (vanishes when $\tau=0$ or the score distribution is stationary) and a dependence bias $\epsilon$ (vanishes when successive neighborhoods do not overlap). Notably, $\Delta$ absorbs \emph{all} sources of score non-stationarity---including time-varying labels $l_j^{(t)}$---so the guarantees hold in dynamic environments without requiring static-world assumptions.

\subsection{Probabilistic Joint Safety Guarantee}

We now link the neighborhood coverage to the physical safety constraints~\eqref{eq:chance_constraint}. For this guarantee, we assume that the RGB-D observation $(I_t, Z_t)$ covers $\mathcal{N}_r(x_t)$ at each step, so every safety-relevant cell has an up-to-date embedding when the safety map is constructed. The neighborhood radius is chosen so that $r \ge \bar{D} + \delta$, where $\delta$ upper-bounds the per-step displacement $\|x_{t+1} - x_t\|$. This condition guarantees that every cell whose safety zone the robot could enter at $x_{t+1}$ is already contained in $\mathcal{N}_r(x_t)$, so the safety map computed at time $t$ remains valid for planning the next step.

\begin{mythm}[\textbf{Per-Step Semantic Safety}]
\label{thm:safety}
With reliable obstacle detection and $r \ge \bar{D} + \delta$, suppose that at time $t$:
(i) $\mathbb{P}(S_t \le Q_t) \ge 1 - \alpha_t'$ for some $\alpha_t' \in (0,1)$; and
(ii) the robot state $x_t$ satisfies $\| x_t - c_j \| \ge d_{\emph{wc}}^{(j,t)}$ for all $j \in \mathcal{J}_{\emph{obs}}^{(t)} \cap \mathcal{N}_r(x_t)$.
Then $\mathbb{P}(\exists\, j \in \mathcal{N}_r(x_t) : \| x_t - c_j \| < D_{\emph{safe}}(l_j^{(t)})) \le \alpha_t'$.
\end{mythm}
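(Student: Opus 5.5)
The plan is an event-inclusion argument: show that the unsafe event is contained in the miscoverage event $\{S_t > Q_t\}$, then apply hypothesis (i) by monotonicity of probability. Write $E_t$ for the coverage event $\{S_t \le Q_t\}$. The text after \eqref{eq:conf_set} already shows that $E_t$ is equivalent to $l_j^{(t)} \in \mathcal{C}_j^{(t)}$ for every $j \in \mathcal{N}_r(x_t)$. This uses the definition $S_t = \max_j s(\mathbf{v}_j^{(t)}, \psi(l_j^{(t)}))$, the post-discovery assumption $l_j^{(t)} \in \mathcal{V}_t$, and the definition of $\mathcal{C}_j^{(t)}$ as a sublevel set at $Q_t$. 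It therefore suffices to prove that on $E_t$ no cell of $\mathcal{N}_r(x_t)$ is violated.

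Fix an outcome in $E_t$ and a cell $j \in \mathcal{N}_r(x_t)$. I would split into two cases according to whether $j$ is a detected obstacle cell.

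\textbf{Case 1:} $j \in \mathcal{J}_{\text{obs}}^{(t)}$. Coverage gives $l_j^{(t)} \in \mathcal{C}_j^{(t)}$. Hence $\mathcal{C}_j^{(t)} \neq \emptyset$ and $d_{\text{wc}}^{(j,t)} = \max_{l \in \mathcal{C}_j^{(t)}} D_{\text{safe}}(l) \ge D_{\text{safe}}(l_j^{(t)})$. Combining this with hypothesis (ii) yields $\|x_t - c_j\| \ge d_{\text{wc}}^{(j,t)} \ge D_{\text{safe}}(l_j^{(t)})$.

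\textbf{Case 2:} $j \notin \mathcal{J}_{\text{obs}}^{(t)}$. Here I would use the ``reliable obstacle detection'' hypothesis. I would read it as saying that every cell with a nonzero safety requirement is flagged as an obstacle, so that free cells have $D_{\text{safe}}(l_j^{(t)}) = 0$. The constraint $\|x_t - c_j\| \ge 0$ then holds trivially.

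Together the two cases give $E_t \subseteq \{\forall j \in \mathcal{N}_r(x_t): \|x_t - c_j\| \ge D_{\text{safe}}(l_j^{(t)})\}$. Taking complements and applying (i),
\[
\mathbb{P}\bigl(\exists\, j \in \mathcal{N}_r(x_t) : \|x_t - c_j\| < D_{\text{safe}}(l_j^{(t)})\bigr) \le \mathbb{P}(S_t > Q_t) \le \alpha_t'.
\]

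The main obstacle is stating precisely what ``reliable obstacle detection'' and the condition $r \ge \bar{D} + \delta$ contribute, since neither is formally defined earlier. Here $\bar{D}$ is presumably $\max_l D_{\text{safe}}(l)$. I would make the following explicit in the proof.
\begin{itemize}
\item The radius condition is not needed for the inequality at $x_t$ itself. It only ensures that the map is relevant at the next step: any cell whose safety zone could contain $x_{t+1}$ satisfies $\|x_t - c_j\| \le \bar{D} + \delta \le r$, so it already lies in $\mathcal{N}_r(x_t)$.
\item I would note that this inequality is all the radius condition buys, so that the statement's quantifier over $\mathcal{N}_r(x_t)$ is justified and no cell outside the neighborhood can cause a violation.
\item I would state the detection assumption as the identity $\{j \in \mathcal{N}_r(x_t) : D_{\text{safe}}(l_j^{(t)}) > 0\} \subseteq \mathcal{J}_{\text{obs}}^{(t)}$, since Case 2 rests on it.
\end{itemize}
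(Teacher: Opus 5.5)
Your proposal is correct and follows the paper's proof essentially step for step. Both arguments use the coverage equivalence $\{S_t \le Q_t\} \iff l_j^{(t)} \in \mathcal{C}_j^{(t)}$ for all $j \in \mathcal{N}_r(x_t)$, combine worst-case dominance $d_{\text{wc}}^{(j,t)} \ge D_{\text{safe}}(l_j^{(t)})$ with (ii) and reliable obstacle detection, and conclude from $V \subseteq \{S_t > Q_t\}$. The only difference is cosmetic: you split on $j \in \mathcal{J}_{\text{obs}}^{(t)}$ where the paper splits on $D_{\text{safe}}(l_j^{(t)}) > 0$ versus $=0$, and your explicit use of the post-discovery assumption and the non-emptiness of $\mathcal{C}_j^{(t)}$ spells out steps the paper leaves implicit.
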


Unlike per-cell CP where a union bound over $M$ cells requires replacing $\alpha$ by $\alpha/M$, our neighborhood-aggregated score $S_t$ provides \emph{joint} coverage directly. Combining Theorem~\ref{thm:safety} with Proposition~\ref{prop:tracking} yields:

\begin{mycol}[\textbf{Long-Run Average Safety}]
\label{cor:mission}
Under reliable obstacle detection, Assumptions \textbf{A1} and \textbf{A2}, and the delayed expert oracle, if the planner ensures $\| x_t - c_j \| \ge d_{\emph{wc}}^{(j,t)}$ for all $j \in \mathcal{J}_{\emph{obs}}^{(t)} \cap \mathcal{N}_r(x_t)$ at every step, then:
\begin{equation}
    \limsup_{T \to \infty}\; \frac{1}{T}\sum_{t=1}^{T} \mathbb{P}\!\left(\exists\, j \in \mathcal{N}_r(x_t) : \| x_t - c_j \| < D_{\emph{safe}}(l_j^{(t)}) \;\middle|\; Q_t \right) \le \alpha + \tau\Delta + 2\epsilon.
    \label{eq:mission_safety}
\end{equation}
The bound degrades gracefully: it reduces to $\alpha$ when $\tau = 0$ or $\Delta = 0$, and when $\epsilon = 0$.
\end{mycol}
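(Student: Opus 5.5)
The plan is to combine Theorem~\ref{thm:safety} pointwise in $t$ with Proposition~\ref{prop:tracking}, but in a conditional form. The corollary conditions on $Q_t$, so we need a conditional version of hypothesis (i) of Theorem~\ref{thm:safety}. Fix $t$ and condition on $Q_t$, which is $\mathcal{F}_{t-1}$-measurable because it is a deterministic function of $\alpha_t$ and past scores. The argument of Theorem~\ref{thm:safety} is purely deterministic on the coverage event: on $\{S_t \le Q_t\}$, every $l_j^{(t)} \in \mathcal{C}_j^{(t)}$, so $d_{\text{wc}}^{(j,t)} \ge D_{\text{safe}}(l_j^{(t)})$. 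The planner hypothesis then gives $\|x_t - c_j\| \ge D_{\text{safe}}(l_j^{(t)})$ for all obstacle cells in $\mathcal{N}_r(x_t)$. Reliable obstacle detection and $r \ge \bar D + \delta$ handle non-obstacle cells, whose $D_{\text{safe}}$ is effectively $0$, and cells outside the detected set. Hence the unsafe event at time $t$ is contained in $\{S_t > Q_t\}$, and
\[
\mathbb{P}\bigl(\exists j\in\mathcal{N}_r(x_t): \|x_t-c_j\|<D_{\text{safe}}(l_j^{(t)}) \,\big|\, Q_t\bigr) \le \mathbb{P}(S_t > Q_t \mid Q_t).
\]

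Next we replace the conditional miscoverage by the marginal-CDF quantity. By the second inequality in \textbf{A2}, $\mathbb{P}(S_t>Q_t\mid\mathcal{F}_{t-1}) \le 1-F_t(Q_t)+\epsilon$. Two points need care here. First, $F_t(Q_t)$ means $F_t$ evaluated at the random point $Q_t$. Second, passing from conditioning on $\mathcal{F}_{t-1}$ to conditioning on $Q_t$ needs justification, either by reading the corollary's conditioning as conditioning on $\mathcal{F}_{t-1}$ or by the tower property, since $\sigma(Q_t)\subseteq\mathcal{F}_{t-1}$ and $F_t(Q_t)$ is $\sigma(Q_t)$-measurable. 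We take the second route: if $\sigma(Q_t)\subseteq\mathcal{F}_{t-1}$, the bound transfers directly. Averaging over $t=1,\dots,T$ then gives
\[
\frac1T\sum_{t=1}^T \mathbb{P}(\text{unsafe}_t\mid Q_t) \le \frac1T\sum_{t=1}^T\bigl(1-F_t(Q_t)\bigr) + \epsilon.
\]

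Finally we take $\limsup_{T\to\infty}$ and apply the upper half of Proposition~\ref{prop:tracking}. This gives $\limsup_T \frac1T\sum_t (1-F_t(Q_t)) \le \alpha + \tau\Delta + \epsilon$. Adding the extra $\epsilon$ from the previous step yields $\alpha+\tau\Delta+2\epsilon$. The term $2\epsilon$ is natural: one $\epsilon$ is already inside Proposition~\ref{prop:tracking}, from linking delayed errors to $F_{t-\tau}$, and the second comes from linking the current conditional miscoverage to $F_t$.

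The main obstacle is the measurability and conditioning bookkeeping in the second step. Proposition~\ref{prop:tracking} concerns the quantity $1-F_t(Q_t)$ with $Q_t$ random, while the corollary uses conditional probabilities given $Q_t$. The link between them rests on $Q_t$ being $\mathcal{F}_{t-1}$-measurable, so that the tower property and \textbf{A2} apply. The historical quantile $\hat Q$ depends on past scores $S_{s-\tau}$, not only on the $\text{err}_s$. If $Q_t$ is therefore not $\sigma(\text{err}_{1:t-1})$-measurable, I would enlarge $\mathcal{F}_{t-1}$ to include past scores and note that \textbf{A2} is meant with respect to that filtration. The remaining steps are direct substitutions.
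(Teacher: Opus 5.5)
Your proposal is correct and follows the paper's proof essentially step for step. Both arguments show $V_t \subseteq \{S_t > Q_t\}$ via the argument of Theorem~\ref{thm:safety}, then use the tower property with the second condition of \textbf{A2} to get $\mathbb{P}(V_t \mid Q_t) \le 1 - F_t(Q_t) + \epsilon$, and finally average and apply the upper half of Proposition~\ref{prop:tracking} to reach $\alpha + \tau\Delta + 2\epsilon$. Your measurability caveat is well taken: the paper claims $Q_t$ is $\mathcal{F}_{t-1}$-measurable because it ``depends only on past errors,'' but $\hat{Q}$ also depends on the stored past scores, so reading \textbf{A2} with respect to the enlarged filtration (past errors plus past delayed scores) is the right repair, and the martingale step in Lemma~\ref{lem:martingale} goes through unchanged for that filtration.
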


\begin{remark}[Practical magnitude of $\epsilon$]
\label{rem:epsilon}
$\epsilon$ could be non-negligible when consecutive neighborhoods are temporally correlated. Empirically, however, the delay ablation in \Cref{fig:delay_ablation} 
indicates that the $\epsilon$ term has negligible practical effect in our experimental setting, which covers diverse indoor scenes representative of general domestic deployment.
\end{remark}


\section{Experiments}
\label{sec:experiments}


\subsection{Setup}

\begin{wrapfigure}{r}{0.35\columnwidth}
    \vspace{-5pt}
    \centering
    \includegraphics[width=0.25\columnwidth]{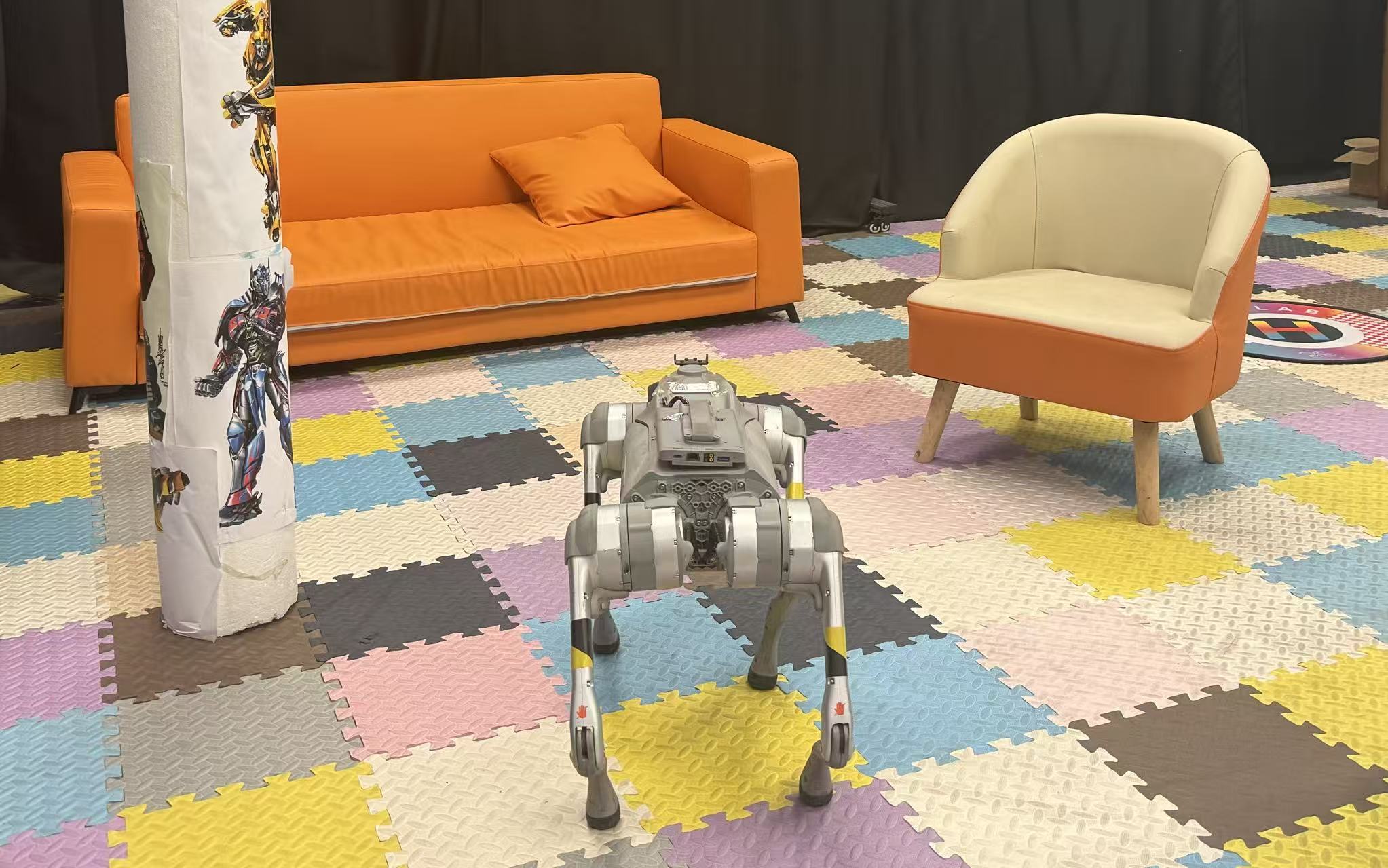}
    \caption{Real-world deployment.}
    \label{fig:real_robot}
    \vspace{-7pt}
\end{wrapfigure}
We use full 2195 object navigation episodes from Matterport3D (MP3D) dataset~\citep{chang2017matterport3d}. Each episode specifies a sequence of target objects; the robot must navigate to each in turn. We report three metrics: \textit{Success Rate} (SR), the fraction of episodes where all targets are reached; \textit{Semantic Collision Rate} (SCR), the fraction of episodes terminated due to violating a semantic safety constraint; and \textit{Success weighted by Path Length} (SPL)~\citep{anderson2018evaluation}, defined as $\frac{1}{N}\sum_{i=1}^N S_i \frac{d_i}{\max(p_i, d_i)}$ where $S_i$ is a binary success indicator, $d_i$ is the geodesic distance, and $p_i$ is the actual path length.
Unlike conventional ObjectNav evaluation, where physical contacts with obstacles are often handled as blocked motion, our protocol treats semantic safety violations as episode-level failures. An episode is terminated whenever the robot enters a label-dependent safety region. Accordingly, SR and SPL reflect performance in a more stringent safety-aware navigation setting, where goal-reaching and semantic clearance are evaluated jointly.

We compare six conditions on the same 2195 episodes. The obstacle safety dictionary and semantic distance assignments are detailed in \Cref{app:obstacle_dict}. In all conditions, the robot has \textbf{no prior knowledge} of the environment. Conditions B--D use GCLIP dense features for obstacle detection; B--D start with $\mathcal{V}_0 = \emptyset$. 
\textbf{A.} \textbf{Baseline (OneMap):} Standard OneMap~\citep{busch2025onemap} navigation with YOLO-based obstacle detection. No semantic safety awareness.
\textbf{B.} \textbf{GCLIP Argmax:} VLM dense features for obstacle detection with no conformal prediction.
\textbf{C.} \textbf{Uniform Dilation:} All detected obstacle cells are dilated by the maximum safety distance $\bar{D}$, regardless of their semantic label.
\textbf{D.} \textbf{CoSMO (open-vocabulary):} Full CoSMO pipeline with ACI-calibrated threshold ($\alpha = 0.1$, $\gamma = 0.05$). The dictionary starts empty and is expanded online.
\textbf{E.} \textbf{CoSMO (known-vocabulary):} Same as D, but the full obstacle dictionary is pre-loaded at the start of each episode ($\mathcal{V}_0 = \mathcal{V}$). This isolates the effect of cold-start dictionary expansion from the conformal calibration.
\textbf{F.} \textbf{CoSMO (calibrated with Grounding-DINO):} Same as D, but the expert oracle is replaced by Grounding DINO as the calibration source. This evaluates CoSMO under a realistic, noisy expert.
Real-world evaluation is presented in Section~\ref{subsec:real_world_experiments}.
\subsection{Main Results}

\begin{table}[t]
\centering
\begin{minipage}[b]{0.5\columnwidth}
    \centering
    \setlength{\tabcolsep}{3pt}
    \renewcommand{\arraystretch}{1.15}
    \begin{tabular}{@{}lccc@{}}
    \toprule
    \textbf{Method} & \textbf{SR} $\uparrow$ & \textbf{SCR} $\downarrow$ & \textbf{SPL} $\uparrow$ \\
    \midrule
    A. Baseline (OneMap)     & 11.3\% & 25.5\%          & 0.071 \\
    B. GCLIP Argmax          & 18.4\% & 13.1\%          & 0.103 \\
    C. Uniform Dilation      & 8.7\%  & 8.9\%           & 0.053 \\ 
    D. CoSMO (open-vocab.)    & 13.8\% & 9.8\%           & 0.081 \\
    \midrule
    E. CoSMO (known-vocab.)   & 10.4\% & 9.4\%  & 0.059 \\
    F. CoSMO (calib.\ GDINO)  & 16.5\% & 10.5\%          & 0.097 \\
    \bottomrule
    \end{tabular}
    \vspace{0.8em}
    \captionof{table}{Navigation results on the full MP3D val split (2195 episodes). SR = Success Rate, SCR = Semantic Collision Rate, SPL = Success weighted by Path Length. Best safety in \textbf{bold}.}
    \label{tab:main_results}
\end{minipage}\hfill
\begin{minipage}[b]{0.46\columnwidth}
    \centering
    \includegraphics[width=0.95\linewidth]{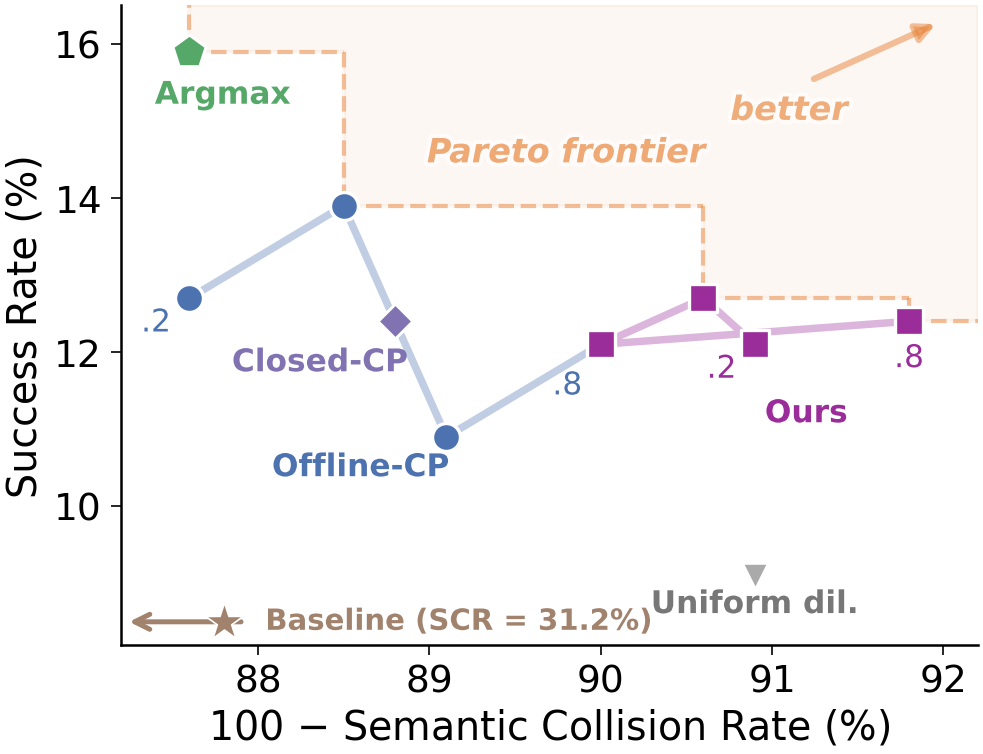}
    \vspace{0.3em}
    \captionof{figure}{Pareto frontier of safety ($100 - \text{SCR}$) vs.\ Success Rate on 330 ablation episodes.}
    \label{fig:pareto}
\end{minipage}
\end{table}

\Cref{tab:main_results} presents the main comparison. Conditions B--D and F operate under \textit{full vocabulary hold-out}: the robot starts each episode with $\mathcal{V}_0 = \emptyset$ and discovers all 5 obstacle categories online. Condition E pre-loads the full dictionary to isolate the cold-start effect. Detailed analysis is provided in Appendix \Cref{app:experiments}.

The Baseline yields $25.5\%$ SCR. GCLIP Argmax reduces this to $13.1\%$ by leveraging VLM embeddings for online obstacle detection. \textbf{Conformal calibration further reduces collisions.}
Comparing B and D (same backbone, without/with CoSMO), CoSMO reduces SCR from $13.1\%$ to $\mathbf{9.8\%}$. The end-to-end reduction relative to the geometry-only baseline is $61.6\%$, at a modest SR cost reflecting the safety--efficiency tradeoff. The safety--success tradeoff is controlled by the coverage target $1 - \alpha$. This tradeoff also varies across scenes: in a more open layout, CoSMO reduces SCR by $40\%$ relative to GCLIP Argmax with only a $0.5$ percentage-point decrease in SR (Appendix~\ref{app:scene_tradeoff}). \Cref{fig:pareto} shows the Pareto frontier on 330 ablation episodes. CoSMO uniformly dominates Offline-CP across all coverage levels. The ``Uniform dilation'' baseline achieves the most conservative SCR but at a steep SR cost, isolating the value of label-specific conformal calibration over worst-case uniform dilation. \textbf{Known- vs.\ open-vocabulary.} Pre-loading the dictionary (D~$\to$~E) improves SCR only marginally, showing that CoSMO remains effective under the stricter cold-start setting. \textbf{Closed-set vs.\ open-vocabulary.} At equal coverage targets, no closed-set CP variant matches CoSMO on either SR or SCR (\Cref{fig:pareto}).

\subsection{Sensitivity to Expert Oracle}
\label{subsec:expert_ablation}
\textbf{Expert accuracy.}\quad
Replacing the ground-truth expert with Grounding-DINO~\citep{liu2024groundingdino}, an open-set object detector, increases SCR only marginally from $9.8\%$ to $10.5\%$ on the main 2195-episode benchmark (Condition F, \Cref{tab:main_results}); the method degrades gracefully rather than collapsing (detailed analysis in \Cref{app:expert_quality}).
\textbf{Expert delay.}\quad
\Cref{fig:delay_ablation} shows the effect of $\tau \in \{0, 5, 10, 20, 100\}$ on CoSMO. All $\tau$ values converge to the same $Q_{90\%} \approx 0.257$, and episode-level SCR stays within $10.0$--$10.9\%$ (\Cref{tab:delay_ablation}), confirming delay-independent asymptotic calibration (Proposition~\ref{prop:coverage_delay}). Instantaneous coverage oscillations grow with $\tau$, consistent with Proposition~\ref{prop:tracking}.

\begin{figure}[t]
    \centering
    \includegraphics[width=\columnwidth]{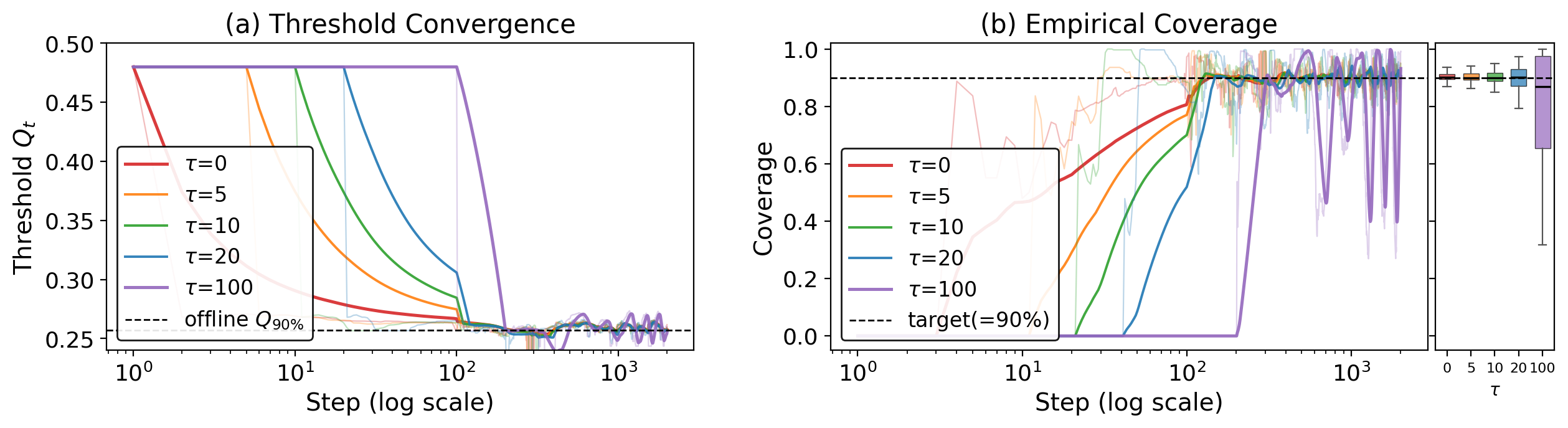}
    \caption{\textbf{Effect of expert delay $\tau$ on CoSMO.} \textbf{(a)}~Threshold convergence: all delay values converge to the same offline quantile $Q_{90\%}$, consistent with Proposition~\ref{prop:coverage_delay}. \textbf{(b)}~Empirical coverage: all curves converge to the $90\%$ target, but larger $\tau$ produces larger oscillations, consistent with Proposition~\ref{prop:tracking}.}
    \label{fig:delay_ablation}
\end{figure}

\subsection{Real-world Experiments}
\label{subsec:real_world_experiments}

We deploy CoSMO on a Unitree Go2 robot (\Cref{fig:real_robot}) and conduct 30 navigation trials across two indoor scenes. The semantic obstacle dictionary in Scenario 1 includes \texttt{comic book}, \texttt{chair}, \texttt{teddy bear}, \texttt{toy}, \texttt{cat}, and \texttt{dog}. Scenario 2 includes \texttt{comic book}, \texttt{chair}, \texttt{Batman}, \texttt{Superman}, \texttt{Spider Man}, \texttt{Iron Man}, and \texttt{Captain America}. These scenarios evaluate both household categories and free-form concepts. The robot searches for a \texttt{chair} while avoiding semantic obstacles, using GCLIP for onboard perception and Grounding-DINO as the delayed expert oracle. The oracle latency is approximately $350$\,ms, and the feedback delay is $128.1$--$447.6$\,ms.

\Cref{fig:real_world_navigation} illustrates some trials in Scenario 2, which is an initially unknown indoor environment. Across the 30 trials, CoSMO achieves an SR of $46.67\%$ and an SCR of $16.67\%$. GCLIP Argmax achieves $58.06\%$ top-1 obstacle-label accuracy, whereas CoSMO's calibrated prediction sets achieve $90.73\%$ empirical true-label coverage, close to the $90\%$ target. These results demonstrate that CoSMO supports semantic safety-aware navigation with delayed detector feedback and diverse unseen semantic categories in physical deployment.

\begin{figure}[t]
    \centering
    \includegraphics[width=\linewidth]{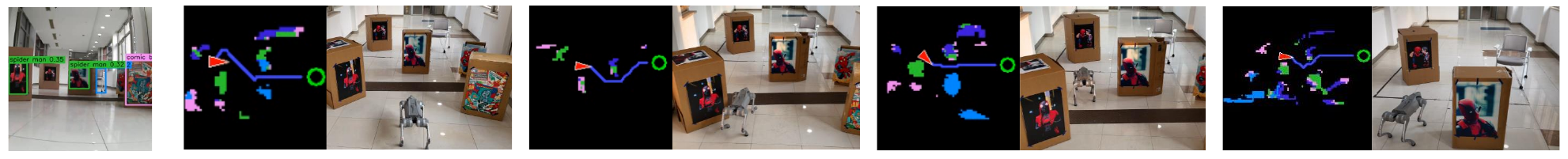}
    \caption{Real-world navigation with a Unitree Go2 robot in an initially unknown $3.5\,\mathrm{m}\times10\,\mathrm{m}$ indoor environment. The robot searches for a chair while avoiding semantic obstacles, with GCLIP for perception and Grounding-DINO for delayed expert feedback. Different colors denote different argmax labels predicted by the VLM-based perception module. The varying argmax labels across observations show the ambiguity of VLM perception. CoSMO instead constructs calibrated coverage sets through online conformal inference, achieving label coverage with a guarantee.}
    \label{fig:real_world_navigation}
\end{figure}


\section{Limitations}
\label{sec:discussion}
Although CoSMO provides a practical way to incorporate uncertainty and semantic safety into robot navigation, its theoretical guarantees and practical deployment are subject to several assumptions and limitations.

The safety guarantees rely on the assumption that the expert oracle provides reliable labels. In practice, the coverage guarantee is only as reliable as the supervision source used for calibration and dictionary expansion. The experiment results of replacing the ground-truth oracle with Grounding-DINO suggest that CoSMO degrades gracefully under noisy supervision. A useful direction for future work is to formally characterize how expert label noise propagates into the coverage guarantee. Moreover, the current implementation queries the expert at every step, which simplifies online calibration and supports the theoretical analysis, but can be costly in real deployments. A promising direction is to design a \emph{self-triggered} querying mechanism in which the agent requests expert feedback only when its own uncertainty is high or when the safety dictionary is insufficient.

A further limitation stems from the evaluation environment itself. Existing ObjectNav benchmarks such as Matterport3D primarily feature compact residential layouts. When semantic safety dilations are introduced, the inflated obstacle regions frequently occlude narrow corridors and doorways, accentuating the SR--SCR tradeoff. The scene-level results in Appendix~\ref{app:scene_tradeoff} illustrate how available free space affects this tradeoff. In the more open scene, CoSMO reduces SCR by $40\%$ relative to  GCLIP Argmax while maintaining comparable SR. In contrast, the more constrained scene exhibits a larger SR decrease. These results demonstrate that CoSMO can substantially improve semantic safety with little loss of task  success when alternative routes around dilated obstacles are available. We expect this tradeoff to be milder in more spacious deployment scenarios such as warehouses, retail stores, or hospital corridors, where semantic clearance zones are less likely  to block the only viable path.

\section{Conclusion}

 We presented CoSMO, a conformal semantic mapping framework for open-world safe robot navigation. By continuously expanding its safety knowledge and adapting to newly encountered object categories, CoSMO remains robust in challenging open-world settings, including unseen categories and distribution shifts. These capabilities establish CoSMO as an open-world calibration framework, supported by formal safety guarantees.
 Experiments on 2195 Matterport3D episodes show that CoSMO reduces semantic collisions by $61.6\%$ relative to the geometry-only baseline, even when all hazard categories are discovered online. Real-world experiments on a Unitree Go2 robot further demonstrate the practicality of CoSMO, maintaining calibrated semantic coverage under practical delayed expert feedback and previously unseen free-form categories.
 Future work should focus on self-triggered expert querying, incorporating expert uncertainty into complete safety guarantees, and establishing dedicated open-world semantic safety benchmarks with spacious environments and native safety specifications.

\clearpage
\acknowledgments{
The authors would like to thank Ruijia Liu, Bowen Ye, and Shaowen Miao for their valuable advice on the writing of this paper.
}

\bibliography{ref}


\clearpage
\appendix
\etocdepthtag.toc{appendix}

\begin{center}
{\Large\bfseries Supplementary Material for the Paper}\\[4pt]
{\large CoSMO: Conformal Semantic Mapping for Open-World Safe Robot Navigation}
\end{center}
\vspace{12pt}

\etocsettagdepth{main}{none}
\etocsettagdepth{appendix}{subsection}
\etocsettocstyle{}{}
\etocsetstyle{section}
  {}
  {\etociffirst{}{\smallskip}}
  {\noindent\makebox[1.8em][l]{\etocnumber.}\etocname\dotfill\etocpage\par}
  {}
\etocsetstyle{subsection}
  {}
  {}
  {\noindent\hspace{1.8em}\makebox[2.2em][l]{\etocnumber.}\etocname\dotfill\etocpage\par}
  {}
\tableofcontents
\vspace{12pt}

\section{Related Works}
\textbf{Vision-Based Navigation.}
Vision-based navigation has been extensively studied in embodied AI, with standardized task definitions and evaluation protocols for agents navigating in unseen 3D environments~\cite{anderson2018evaluation,zhang2026cross,savva2019habitat,zhang2025moma}. In object-goal navigation, the agent must locate an instance of a target object category from egocentric visual observations in an initially unexplored environment. Early semantic navigation methods improve long-horizon exploration by building structured spatial memories: SemExp~\cite{chaplot2020object} constructs an episodic semantic map and uses the goal category to guide exploration, while PONI~\cite{ramakrishnan2022poni} learns object-conditioned potential functions over semantic maps to predict promising search regions. More recently, vision-language models (VLMs) have enabled open-vocabulary navigation without task-specific training. VLMaps~\cite{huang2023vlmaps} anchors dense CLIP~\cite{radford2021learning} features in a spatial map for natural-language goal indexing; CoW~\cite{gadre2023cows} and ZSON~\cite{majumdar2022zson} leverage CLIP similarities for zero-shot object-goal navigation during active exploration; ESC~\cite{zhou2023esc} balances semantic goal pursuit with frontier exploration; and VLFM~\cite{yokoyama2024vlfm} uses vision-language frontier scoring for efficient goal-directed search. More recently, OneMap~\cite{busch2025onemap} introduced a zero-shot multi-object navigation benchmark, where an online open-vocabulary map is reused across consecutive object searches.

Despite this progress, existing vision-based navigation methods primarily focus on goal-reaching performance and exploration efficiency, rather than semantic safety during navigation. For example, in standard ObjectNav evaluation, success is purely based on whether the robot reaches the target object, ignoring collisions with other objects along the way. In real-world deployment, a robot navigating to a ``bed'' should not crash into a ``table'' en route. Our work addresses this gap by introducing semantic safety constraints into the open-vocabulary navigation framework.

\textbf{Conformal Prediction in Uncertainty Quantification.}
Conformal prediction (CP) is a distribution-free framework for constructing prediction sets with finite-sample coverage guarantees \cite{shafer2008tutorial}. Its model-agnostic nature---applicable to any black-box predictor---has made it attractive for safety-critical control, where formal uncertainty quantification is essential.
Existing conformal classification methods mainly design nonconformity scores over a fixed label probability vector. A simple and widely used choice is the softmax/LAC score, which forms prediction sets by thresholding the estimated probability assigned to each candidate class~\cite{sadinle2019least}. This idea is closely related to least-ambiguous set-valued classification, where optimal prediction sets are characterized by level sets of conditional class probabilities. To obtain more adaptive prediction sets, Romano et al.~\cite{romano2020classification} proposed Adaptive Prediction Sets (APS), which compute nonconformity by accumulating sorted class probabilities until the true label is included. Angelopoulos et al.~\cite{angelopoulos2021uncertainty} further introduced Regularized Adaptive Prediction Sets (RAPS), which regularize the tail of the sorted probability vector to reduce sensitivity to noisy low-probability classes and improve set efficiency for image classifiers. Recent conformal prediction methods for zero-shot vision-language models also largely follow this probability-vector paradigm, evaluating LAC, APS, and RAPS on softmax-normalized class scores~\cite{silvarodriguez2025conformal}. Despite their success in closed-set classification and zero-shot model transfer, these scores are still defined with respect to a fixed candidate label set. In open-world semantic safety, however, new safety-relevant labels may be discovered and inserted into the dictionary during deployment, which changes the probability normalization and may alter the ranking of existing labels. Therefore, there remains a lack of nonconformity score design tailored to uncertainty quantification under an open-world setting with an online-expanding label dictionary.

\textbf{Conformal uncertainty quantification for safety-critical planning and control.} Recent work has extensively applied CP to control synthesis: Lindemann et al.\ \cite{lindemann2023safe} used CP to construct safe reachable sets for dynamic obstacle avoidance; Yang et al.\ \cite{yang2023safe} extended this to perception-based control under stochastic sensor uncertainty; 
Dixit et al.\ \cite{dixit2023adaptive} and Wang et al.\ \cite{WANG2026101751,wang2025sparcpredictionbasedsafecontrol} used adaptive conformal prediction and robust conformal prediction for safe motion planning among dynamic agents; and \cite{yu2023,stamouli2024recursively,sun2024conformal} further explored CP-based guarantees in temporal logic planning, shrinking-horizon MPC, and diffusion-based dynamics models, respectively. 
CP has also been applied to robot and autonomous-driving perception~\citep{xu2025openvocabseg, doula2025krps}. Uncertainty sets in these works are typically defined over fixed prediction targets, such as states, trajectories, or perception errors. \cite{sundarsingh2026safe} proposed the concept of \textit{semantic safety}---maintaining semantically appropriate distances from objects based on their category---and used conformalized semantic maps to provide probabilistic safety guarantees. However, their approach operates on a \textit{closed} semantic map with a fixed label set and relies on offline calibration. 
Several studies have applied conformal prediction to quantify the uncertainty of vision-language models \cite{tu2024empirical, kostumov2024uncertaintyawareevaluationvisionlanguagemodels}, revealing that VLM confidence scores are often poorly calibrated---model uncertainty is not aligned with prediction accuracy. However, these analyses are limited to static image classification or visual question answering (VQA) benchmarks and do not address how such uncertainty affects closed-loop planning in embodied agents.

\section{System Dynamics and Optimization Formulation}
\label{app:dynamics}

The robot's motion is modeled as a discrete-time system:
\begin{equation}
    x_{t+1} = f(x_t, u_t),
\end{equation}
where $t \in \{0, \dots, H\}$ denotes the time step, $x_t \in \mathcal{X} \subset \mathbb{R}^{n_x}$ represents the robot state (e.g., position and orientation), and $u_t \in \mathcal{U} \subset \mathbb{R}^{n_u}$ denotes the control input constrained by a compact set $\mathcal{U}$. The full chance-constrained optimal control problem is:

\begin{myprob}
Given a target miscoverage rate $\alpha \in (0, 1)$, compute a control sequence that minimizes the path length while satisfying the semantic safety constraint:
\begin{subequations}
\label{eq:optimization_problem}
\begin{align}
        \min_{\mathbf{u}_{0:H-1}} \quad & J(\mathbf{x}_{0:H-1},\mathbf{u}_{0:H-1}) \\
    \text{s.t.} \quad & x_{t+1} = f(x_t, u_t), \quad \forall t, \\
    & x_0 = x_{\text{init}}, \quad x_H \in \mathcal{X}_{\text{goal}}, \\
    & u_t \in \mathcal{U}, \quad \forall t, \\
    & \mathbb{P}\!\left( \bigwedge_{j \in \mathcal{N}_r(x_t)} \left( \| x_t - c_j \|_2 \ge D_{\text{safe}}(l_j^{(t)}) \right) \right) \ge 1 - \alpha, \;\; \forall t.
\end{align}
\end{subequations}
\end{myprob}

\section{Proofs}
\label{app:proofs}

\subsection{Proof of Proposition~\ref{prop:coverage_delay} (Asymptotic Calibration under Delay)}
\label{app:proof_prop1}

The proof proceeds in two steps: we first establish that $\alpha_t$ remains bounded, then use a telescoping argument.

\begin{mylem}[\textbf{Boundedness of $\alpha_t$}]
\label{lem:bound_alpha}
Under the ACI update $\alpha_{t+1} = \alpha_t + \gamma(\alpha - \emph{err}_t)$ with $\alpha_1 \in [0,1]$ and $\gamma > 0$, we have $\alpha_t \in [-\gamma, 1+\gamma]$ for all $t \ge 1$, almost surely.
\end{mylem}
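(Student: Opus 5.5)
We would prove this by induction on $t$, as in the boundedness lemma of standard ACI~\citep{gibbs2021adaptive}. The only extra ingredient is a boundary convention for the empirical quantile, which we would state explicitly. For $q \ge 1$ we set $\hat{Q}(q) = +\infty$, and for $q \le 0$ we set $\hat{Q}(q) = -\infty$. Since $\hat{Q}$ is nondecreasing in its argument, this extends it naturally. We also use that every retrospective score $S_{t-\tau}$ is a finite real number, because it is a maximum of finitely many cosine distances. Together these give two forcing facts. If $\alpha_t < 0$, then $Q_t = \hat{Q}(1-\alpha_t) = +\infty$, so $\mathrm{err}_t = \mathbb{I}(S_{t-\tau} > Q_t) = 0$. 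If $\alpha_t > 1$, then $Q_t = -\infty$, so $\mathrm{err}_t = 1$.

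The delay $\tau$ plays no role in either fact. The error indicator only compares \emph{some} finite score with the current threshold $Q_t$, and $Q_t$ is determined by $\alpha_t$ alone. The paper already observed this "same algebraic form" point just before Proposition~\ref{prop:coverage_delay}.

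With these facts, the induction runs as follows.

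\textbf{Base case.} We have $\alpha_1 \in [0,1] \subset [-\gamma, 1+\gamma]$.

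\textbf{Inductive step.} Suppose $\alpha_t \in [-\gamma, 1+\gamma]$. Each update changes $\alpha$ by $\gamma(\alpha - \mathrm{err}_t) \in [-\gamma(1-\alpha), \gamma\alpha] \subset [-\gamma, \gamma]$, using $\alpha \in (0,1)$. We split into three cases.
\begin{itemize}
\item If $\alpha_t \in [0,1]$, then $\alpha_{t+1} \in [-\gamma, 1+\gamma]$ directly from the step-size bound.
\item If $\alpha_t \in [-\gamma, 0)$, the first forcing fact gives $\mathrm{err}_t = 0$. Then $\alpha_{t+1} = \alpha_t + \gamma\alpha > \alpha_t \ge -\gamma$, and also $\alpha_{t+1} < \gamma\alpha < 1+\gamma$.
\item If $\alpha_t \in (1, 1+\gamma]$, the second forcing fact gives $\mathrm{err}_t = 1$. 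Then $\alpha_{t+1} = \alpha_t - \gamma(1-\alpha) < \alpha_t \le 1+\gamma$, and also $\alpha_{t+1} > 1 - \gamma(1-\alpha) > -\gamma$.
\end{itemize}
This closes the induction. The argument is pathwise: it holds for every realization of the scores and expert labels. The "almost surely" qualifier then follows trivially, and no distributional assumption is used.

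The main obstacle is justifying the boundary convention rather than the arithmetic. The update rule \eqref{eq:aci_update} only writes $Q_{t+1} = \hat{Q}(1-\alpha_{t+1})$ without specifying what happens when the quantile level leaves $[0,1]$. So we would state the convention as part of the algorithm's definition. We would also note that any convention with the two properties is enough: if $\alpha_t < 0$, then $Q_t$ is at least every score that can be revealed; if $\alpha_t > 1$, then $Q_t$ lies strictly below every such score. If one prefers to avoid infinite thresholds, we would instead assume the scores lie in a bounded range, such as $[0,2]$ for cosine distance, and set $Q_t = 2$ and $Q_t = -1$ in the two regimes. Either version makes the two forcing facts hold exactly, and the case analysis above then goes through verbatim.
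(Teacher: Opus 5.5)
Your proposal is correct and follows the paper's argument: the same two forcing facts ($\alpha_t<0$ gives $Q_t=+\infty$ and hence $\mathrm{err}_t=0$; $\alpha_t>1$ gives $Q_t=-\infty$ and hence $\mathrm{err}_t=1$, using finiteness of the scores), combined with the fact that each update moves $\alpha_t$ by at most $\gamma$. The paper phrases this as a first-crossing contradiction and uses the out-of-range quantile convention implicitly, whereas you state the convention and run an explicit three-case induction, which is a slightly cleaner rendering of the same proof.
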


\begin{proof}
We show $\alpha_t \ge -\gamma$ for all $t$; the upper bound $\alpha_t \le 1+\gamma$ follows by a symmetric argument.

Suppose for contradiction that $\mathbb{P}(\exists\, t : \alpha_t < -\gamma) > 0$. Since each step changes $\alpha_t$ by at most $\gamma$ (because $|\alpha - \text{err}_t| \le 1$), for $\alpha_t$ to drop below $-\gamma$ it must first pass through a value $\alpha_t \in (-\gamma, 0)$. Consider any such $t$ with $\alpha_t < 0$:
\begin{enumerate}
    \item[(i)] Since $1 - \alpha_t > 1$, the empirical quantile satisfies $Q_t = \hat{Q}(1 - \alpha_t) = +\infty$.
    \item[(ii)] The nonconformity score $S_{t-\tau}$ is finite (since $s(\cdot,\cdot) \in [0,2]$ by the Cauchy--Schwarz bound on cosine similarity).
    \item[(iii)] Therefore $S_{t-\tau} \le Q_t = +\infty$, which gives $\text{err}_t = \mathbb{I}(S_{t-\tau} > Q_t) = 0$.
    \item[(iv)] The update becomes $\alpha_{t+1} = \alpha_t + \gamma\alpha \ge \alpha_t$ (since $\alpha > 0$).
\end{enumerate}
This contradicts the assumption that $\alpha_t$ can decrease when $\alpha_t < 0$, so $\alpha_t$ can never drop below $-\gamma$.

For the upper bound: if $\alpha_t > 1$, then $1 - \alpha_t < 0$, so $Q_t = \hat{Q}(1-\alpha_t) = -\infty$ (the quantile at a negative level). Every finite score exceeds $-\infty$, giving $\text{err}_t = 1$ and $\alpha_{t+1} = \alpha_t - \gamma(1-\alpha) \le \alpha_t$.
\end{proof}

\noindent\textit{Proof of Proposition~\ref{prop:coverage_delay}.}\;
Expanding the ACI recursion from $t=1$ to $T$:
\begin{equation}
    \alpha_{T+1} = \alpha_1 + \gamma \sum_{t=1}^T (\alpha - \text{err}_t).
    \label{eq:telescope}
\end{equation}
By Lemma~\ref{lem:bound_alpha}, the left-hand side satisfies $\alpha_{T+1} \in [-\gamma, 1+\gamma]$. Rearranging~\eqref{eq:telescope}:
\begin{equation}
    \sum_{t=1}^T (\alpha - \text{err}_t) = \frac{\alpha_{T+1} - \alpha_1}{\gamma}.
\end{equation}
Taking absolute values and dividing by $T$:
\begin{equation}
    \left| \frac{1}{T}\sum_{t=1}^T \text{err}_t - \alpha \right| = \frac{|\alpha_{T+1} - \alpha_1|}{T\gamma} \le \frac{\max\{\alpha_1, 1-\alpha_1\} + \gamma}{T\gamma},
\end{equation}
where the inequality uses $|\alpha_{T+1}| \le 1 + \gamma$ and $|\alpha_1| \le 1$. The bound is $O(1/T)$ and contains no dependence on $\tau$: the delay affects \emph{which} neighborhood is evaluated but not the algebraic structure of the recursion. \hfill$\square$

\subsection{Proof of Proposition~\ref{prop:tracking} (Planning-Time Coverage under Delay)}
\label{app:proof_prop2}

The proof has three steps: (C1)~a martingale argument linking realized errors to conditional probabilities, (C2)~bounding the conditional--marginal gap via Assumption \textbf{A2}, and (C3)~bounding the calibration--planning distribution gap via Assumption \textbf{A1}.

\begin{mylem}[\textbf{C1: Martingale Convergence}]
\label{lem:martingale}
Let $\tilde{e}_t \triangleq \mathbb{P}(\emph{err}_t = 1 \mid \mathcal{F}_{t-1})$. Then $\frac{1}{T}\sum_{t=1}^T (\emph{err}_t - \tilde{e}_t) \to 0$ almost surely.
\end{mylem}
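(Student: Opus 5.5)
The natural approach is a standard martingale law of large numbers for bounded martingale differences. Define $M_t \triangleq \mathrm{err}_t - \tilde{e}_t$. Since $\mathrm{err}_t \in \{0,1\}$ and $\tilde{e}_t = \mathbb{E}[\mathrm{err}_t \mid \mathcal{F}_{t-1}]$, we have $|M_t| \le 1$.

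The one point needing care is measurability. We have $\mathcal{F}_t = \sigma(\mathrm{err}_1,\dots,\mathrm{err}_t)$, so $\mathrm{err}_t$ is $\mathcal{F}_t$-measurable by definition. The conditional probability $\tilde{e}_t$ is $\mathcal{F}_{t-1}$-measurable, hence also $\mathcal{F}_t$-measurable. Therefore $M_t$ is adapted to $(\mathcal{F}_t)$, and $\mathbb{E}[M_t \mid \mathcal{F}_{t-1}] = \tilde{e}_t - \tilde{e}_t = 0$. So $(M_t)$ is a bounded martingale difference sequence and $Z_T \triangleq \sum_{t=1}^T M_t$ is a martingale. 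This holds regardless of how $Q_t$ or the delayed scores $S_{t-\tau}$ depend on the past. All we use is that $\tilde{e}_t$ is defined by conditioning on exactly the filtration generated by the errors.

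With the martingale structure in place, I would take one of two routes.

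\textbf{Route 1: Azuma--Hoeffding plus Borel--Cantelli.} Azuma--Hoeffding with increments bounded by $1$ gives
\[
\mathbb{P}\bigl(|Z_T| \ge T\eta\bigr) \le 2\exp\!\left(-\tfrac{T\eta^2}{2}\right)
\]
for every $\eta > 0$. These probabilities are summable in $T$, so Borel--Cantelli implies $|Z_T|/T < \eta$ eventually, almost surely. Taking $\eta = 1/k$ and intersecting over $k \in \mathbb{N}$ yields $Z_T/T \to 0$ almost surely.

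\textbf{Route 2: $L^2$ martingale convergence plus Kronecker.} Consider the martingale $\sum_t M_t/t$. By orthogonality of martingale increments, $\sum_t \mathbb{E}[M_t^2]/t^2 \le \sum_t 1/t^2 < \infty$, so this martingale is bounded in $L^2$ and converges almost surely. Kronecker's lemma then gives $\frac{1}{T}\sum_{t=1}^T M_t \to 0$ almost surely.

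I expect no step to be a real obstacle. The only subtle point is the adaptedness check above. It matters because the delay $\tau$ might seem to break the martingale property, but it does not: $\mathcal{F}_t$ is generated by the errors themselves, so no assumption on the score process is needed. I would state the Azuma route as the main argument, since it also supplies an explicit finite-$T$ rate $O(\sqrt{\log T / T})$ that can be reused later.
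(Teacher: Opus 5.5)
Your proof is correct. Your Route~2 is essentially the paper's argument. The paper sets $d_t=\mathrm{err}_t-\tilde e_t$, checks the martingale-difference property with respect to $\mathcal{F}_t$, and bounds $\mathrm{Var}(d_t\mid\mathcal{F}_{t-1})=\tilde e_t(1-\tilde e_t)\le 1/4$. It then invokes the martingale strong law (Hall--Heyde, Thm.~2.18) via $\sum_t \mathrm{Var}(d_t\mid\mathcal{F}_{t-1})/t^2<\infty$. That theorem is proved by the same ``$\sum_t d_t/t$ converges, then Kronecker'' argument you sketch.

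Your preferred Route~1 (Azuma--Hoeffding plus Borel--Cantelli) is genuinely different. It relies more heavily on bounded increments, whereas the variance-based strong law only needs summable conditional variances and so would extend to unbounded differences. In exchange, Route~1 is self-contained, with no appeal to a cited limit theorem, and quantitative. It gives exponential tails at each fixed $T$ and an almost-sure rate $|Z_T|/T=O(\sqrt{\log T/T})$. Combined with the $O(1/T)$ bound of Proposition~\ref{prop:coverage_delay}, this could turn the $\limsup$ statements of Lemma~\ref{lem:cond_marginal} and Proposition~\ref{prop:tracking} into finite-$T$ high-probability bounds. The paper only needs the asymptotic statement, so its shorter citation-based route is enough there.

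A minor sharpening: since $d_t\in[-\tilde e_t,\,1-\tilde e_t]$, an interval of length one, Hoeffding--Azuma gives $2\exp(-2T\eta^2)$ rather than $2\exp(-T\eta^2/2)$.

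Your observation about the delay is also correct and worth keeping explicit. Neither the delay nor the dependence of $Q_t$ on past scores affects this lemma, because $\tilde e_t$ conditions on exactly the filtration generated by the errors.
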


\begin{proof}
Define $d_t \triangleq \text{err}_t - \tilde{e}_t$. By construction, $\{d_t\}$ is a martingale difference sequence adapted to $\mathcal{F}_t = \sigma(\text{err}_1, \dots, \text{err}_t)$:
\begin{equation}
    \mathbb{E}[d_t \mid \mathcal{F}_{t-1}] = \mathbb{E}[\text{err}_t \mid \mathcal{F}_{t-1}] - \tilde{e}_t = 0.
\end{equation}
Since $\text{err}_t \in \{0,1\}$ and $\tilde{e}_t \in [0,1]$, we have $|d_t| \le 1$ and:
\begin{equation}
    \text{Var}(d_t \mid \mathcal{F}_{t-1}) = \mathbb{E}[d_t^2 \mid \mathcal{F}_{t-1}] \le \mathbb{E}[\text{err}_t^2 \mid \mathcal{F}_{t-1}] = \tilde{e}_t \le 1.
\end{equation}
More tightly, $\text{Var}(d_t \mid \mathcal{F}_{t-1}) = \tilde{e}_t(1-\tilde{e}_t) \le 1/4$. Since $\sum_{t=1}^{\infty} \text{Var}(d_t \mid \mathcal{F}_{t-1})/t^2 \le \sum_{t=1}^{\infty} 1/(4t^2) < \infty$, the strong law of large numbers for martingale differences (see, e.g., \citet[Theorem~2.18]{hall2014martingale}) gives $\frac{1}{T}\sum_{t=1}^T d_t \to 0$ a.s.
\end{proof}

\begin{mylem}[\textbf{C2: Conditional--Marginal Gap}]
\label{lem:cond_marginal}
Define $\bar{e}_t \triangleq 1 - F_{t-\tau}(Q_t)$, the marginal miscoverage probability at the calibration neighborhood. By the first condition of Assumption \textbf{A2}:
\begin{equation}
    \limsup_{T \to \infty} \left| \frac{1}{T}\sum_{t=1}^T \bar{e}_t - \alpha \right| \le \epsilon.
\end{equation}
\end{mylem}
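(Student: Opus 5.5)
We would prove Lemma~\ref{lem:cond_marginal} by chaining Proposition~\ref{prop:coverage_delay}, Lemma~\ref{lem:martingale}, and the first half of Assumption~\textbf{A2}. The key observation is that $\mathrm{err}_t = \mathbb{I}(S_{t-\tau} > Q_t)$, so the conditional miscoverage of Lemma~\ref{lem:martingale} is exactly
\[
\tilde e_t = \mathbb{P}(S_{t-\tau} > Q_t \mid \mathcal{F}_{t-1}).
\]
This uses that $Q_t$ is determined by the past, in particular by $\alpha_t$, which is a function of $\mathrm{err}_1,\dots,\mathrm{err}_{t-1}$. The first condition of \textbf{A2} then states directly that $|\tilde e_t - \bar e_t| \le \epsilon$ for every $t \ge 1$, with $\bar e_t = 1 - F_{t-\tau}(Q_t)$.

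We then split the deviation into three pieces and apply the triangle inequality:
\[
\Bigl|\tfrac1T\sum_{t=1}^T \bar e_t - \alpha\Bigr|
\le \Bigl|\tfrac1T\sum_{t=1}^T(\bar e_t - \tilde e_t)\Bigr|
+ \Bigl|\tfrac1T\sum_{t=1}^T(\tilde e_t - \mathrm{err}_t)\Bigr|
+ \Bigl|\tfrac1T\sum_{t=1}^T \mathrm{err}_t - \alpha\Bigr|.
\]
The three pieces are controlled as follows.
\begin{itemize}
\item The first term is at most $\epsilon$ for every $T$, by the pointwise bound from \textbf{A2}.
\item The second term tends to $0$ almost surely, by Lemma~\ref{lem:martingale}.
\item The third term is at most $(\max\{\alpha_1,1-\alpha_1\}+\gamma)/(T\gamma) \to 0$, by Proposition~\ref{prop:coverage_delay}. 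This bound holds deterministically along every sample path.
\end{itemize}
Taking $\limsup_{T\to\infty}$ gives the claimed bound $\epsilon$, almost surely.

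The main obstacle is a measurability subtlety. The quantities $\bar e_t$ are random through $Q_t$, so the statement is an almost-sure one, and we must check that $Q_t$ is $\mathcal{F}_{t-1}$-measurable. This can fail because $\hat Q$ uses the history of scores, which is richer than $\sigma(\mathrm{err}_1,\dots,\mathrm{err}_{t-1})$. If so, I would first try enlarging $\mathcal{F}_{t-1}$ to include the past scores $S_1,\dots,S_{t-1-\tau}$ and the thresholds $Q_1,\dots,Q_t$. Lemma~\ref{lem:martingale} still goes through with any filtration to which $\mathrm{err}_t$ is adapted, and \textbf{A2} would be read with respect to this filtration. A secondary point is that the almost-sure limsup from Lemma~\ref{lem:martingale} is what converts the conditional bound into the stated one. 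If a bound on the expectation is wanted instead, dominated convergence applies, since every term is bounded by $1$.
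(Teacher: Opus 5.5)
Your proposal is correct and follows the paper's proof essentially verbatim. Like the paper, you combine the pointwise bound $|\tilde e_t - \bar e_t| \le \epsilon$ from the first condition of \textbf{A2}, via the triangle inequality, with $\frac{1}{T}\sum_{t} \tilde e_t \to \alpha$ a.s., which follows from Lemma~\ref{lem:martingale} and Proposition~\ref{prop:coverage_delay}; the only difference is that you split this last limit into two explicit terms. Your measurability worry is not needed for this lemma: $\tilde e_t = \mathbb{P}(S_{t-\tau} > Q_t \mid \mathcal{F}_{t-1})$ holds because the two events coincide by definition, and \textbf{A2} is used only as a pathwise inequality (although you are right that $Q_t = \hat{Q}(1-\alpha_t)$ depends on past scores, not just past errors).
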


\begin{proof}
By Assumption \textbf{A2}, $|\tilde{e}_t - \bar{e}_t| \le \epsilon$ for all $t$. Therefore:
\begin{equation}
    \left| \frac{1}{T}\sum_{t=1}^T \tilde{e}_t - \frac{1}{T}\sum_{t=1}^T \bar{e}_t \right| \le \frac{1}{T}\sum_{t=1}^T |\tilde{e}_t - \bar{e}_t| \le \epsilon.
    \label{eq:cond_marginal_gap}
\end{equation}
From Lemma~\ref{lem:martingale} and Proposition~\ref{prop:coverage_delay}:
\begin{equation}
    \frac{1}{T}\sum_{t=1}^T \tilde{e}_t = \frac{1}{T}\sum_{t=1}^T \text{err}_t - \frac{1}{T}\sum_{t=1}^T d_t \;\xrightarrow{T \to \infty}\; \alpha \quad \text{a.s.}
    \label{eq:conditional_convergence}
\end{equation}
Combining~\eqref{eq:cond_marginal_gap} and~\eqref{eq:conditional_convergence} via the triangle inequality gives the result.
\end{proof}

\begin{mylem}[\textbf{C3: Calibration--Planning Distribution Gap}]
\label{lem:calib_plan_gap}
Under Assumption \textbf{A1}, the gap between the planning-time and calibration-time marginal miscoverage satisfies:
\begin{equation}
    \left| (1 - F_t(Q_t)) - \bar{e}_t \right| \le \tau \Delta, \quad \forall\, t \ge 1.
\end{equation}
\end{mylem}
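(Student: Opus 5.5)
The plan is a telescoping argument over the $\tau$ time steps that separate the calibration neighborhood from the planning neighborhood, using Assumption \textbf{A1} once at each step. Fix $t \ge 1$ and write
\begin{equation*}
  (1 - F_t(Q_t)) - \bar{e}_t = F_{t-\tau}(Q_t) - F_t(Q_t),
\end{equation*}
which follows directly from the definition $\bar{e}_t = 1 - F_{t-\tau}(Q_t)$ in Lemma~\ref{lem:cond_marginal}. If $\tau = 0$ the difference is identically zero and there is nothing to prove, so assume $\tau \ge 1$.

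Next, insert the intermediate CDFs and telescope:
\begin{equation*}
  F_{t-\tau}(Q_t) - F_t(Q_t) = \sum_{k=0}^{\tau-1} \bigl( F_{t-\tau+k}(Q_t) - F_{t-\tau+k+1}(Q_t) \bigr).
\end{equation*}
Applying the triangle inequality, each summand is bounded in absolute value by $\sup_{q \ge 0} |F_{s+1}(q) - F_s(q)| \le \Delta$, with $s = t-\tau+k$. This gives the claimed bound of $\tau\Delta$. Three small points need to hold for this to be legitimate.

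\begin{itemize}
  \item \emph{The evaluation point is admissible.} The supremum in \textbf{A1} ranges over $q \ge 0$. The scores $S_t$ take values in $[0,2]$, so every $F_s$ equals $0$ on $q<0$ and equals $1$ on $q \ge 2$, including the values $\pm\infty$ produced by Lemma~\ref{lem:bound_alpha}. Hence the difference is zero for any $Q_t$ outside $[0,\infty)$, and the bound holds trivially there.
  \item \emph{The evaluation point may be random.} $Q_t$ is itself random. The inequality is needed pointwise, for each realized value of $Q_t$. Since \textbf{A1} is a uniform (supremum) bound over $q$, substituting $q = Q_t$ is valid without any independence argument.
  \item \emph{Indices with $t - \tau < 1$.} For early times the index $t - \tau$ falls below $1$. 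Here I would adopt the convention that $F_s$ is defined for those indices with \textbf{A1} holding there as well, or equivalently that the calibration score is taken from the first available step. If instead those early terms are simply left unbounded, they contribute only $O(\tau/T)$ to the time average and so do not affect the $\limsup$ in Proposition~\ref{prop:tracking}.
\end{itemize}

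The only potentially delicate step is the handling of the random, data-dependent threshold $Q_t$, since $F_s$ is defined as a marginal CDF at a fixed $q$. I would resolve this as described above: interpret $F_s(Q_t)$ as $F_s$ evaluated at the realized value of $Q_t$, which is exactly how $\bar{e}_t$ is used in Lemma~\ref{lem:cond_marginal}, and rely on the uniformity of \textbf{A1} over $q$. The rest is routine.
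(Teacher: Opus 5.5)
Your proposal is correct and follows the paper's proof: rewrite the gap as $F_{t-\tau}(Q_t) - F_t(Q_t)$, telescope over the $\tau$ intermediate steps, and bound each increment by $\Delta$ using \textbf{A1}. Your extra remarks address points the paper leaves implicit — evaluating at the realized (possibly $\pm\infty$) threshold $Q_t$ via the uniformity of \textbf{A1}, the restriction to $q \ge 0$, and indices with $t-\tau<1$ — and they are handled correctly.
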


\begin{proof}
By definition, $\bar{e}_t = 1 - F_{t-\tau}(Q_t)$ and the planning-time miscoverage is $1 - F_t(Q_t)$. Applying Assumption \textbf{A1} iteratively via the triangle inequality:
\begin{align}
    |F_{t-\tau}(Q_t) - F_t(Q_t)| &\le \sum_{k=0}^{\tau-1} |F_{t-\tau+k}(Q_t) - F_{t-\tau+k+1}(Q_t)| \le \tau \Delta.
    \label{eq:cdf_triangle}
\end{align}
Since $(1 - F_t(Q_t)) - \bar{e}_t = F_{t-\tau}(Q_t) - F_t(Q_t)$, the result follows.
\end{proof}

\noindent\textit{Proof of Proposition~\ref{prop:tracking}.}\;
Combining Lemmas~\ref{lem:cond_marginal} and~\ref{lem:calib_plan_gap} via the triangle inequality:
\begin{align}
    &\limsup_{T \to \infty} \left| \frac{1}{T}\sum_{t=1}^T (1 - F_t(Q_t)) - \alpha \right| \notag \\
    &\le \limsup_{T \to \infty} \left| \frac{1}{T}\sum_{t=1}^T (1 - F_t(Q_t)) - \frac{1}{T}\sum_{t=1}^T \bar{e}_t \right| + \limsup_{T \to \infty} \left| \frac{1}{T}\sum_{t=1}^T \bar{e}_t - \alpha \right| \notag \\
    &\le \tau\Delta + \epsilon.
\end{align}
The first term uses Lemma~\ref{lem:calib_plan_gap} (each summand is bounded by $\tau\Delta$, so the average is too). The second term is Lemma~\ref{lem:cond_marginal}. \hfill$\square$

\subsection{Proof of Theorem~\ref{thm:safety} (Per-Step Semantic Safety)}
\label{app:proof_thm}

We first establish the key equivalence used throughout the safety analysis.

\begin{mylem}[\textbf{Neighborhood Coverage Equivalence}]
\label{lem:joint_coverage_app}
$S_t \le Q_t$ if and only if $l_j^{(t)} \in \mathcal{C}_j^{(t)}$ for all $j \in \mathcal{N}_r(x_t)$.
\end{mylem}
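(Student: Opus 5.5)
The argument is a direct unfolding of definitions. I would chain the equivalences through the definition of $S_t$ as a maximum and the definition of $\mathcal{C}_j^{(t)}$ as a sublevel set of $s(\mathbf{v}_j^{(t)},\cdot)$. Throughout, I work in the post-discovery regime stated at the start of Section~\ref{sec:analysis}, so $l_j^{(t)} \in \mathcal{V}_t$ for every $j \in \mathcal{N}_r(x_t)$. I also use that the calibration embedding is $\mathbf{v}_{\text{cal},j}^{(t)} = \psi(l_j^{(t)})$, the same text embedding that appears in~\eqref{eq:conf_set}.

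The first step is the per-cell statement. For each fixed $j \in \mathcal{N}_r(x_t)$, I would show that $s(\mathbf{v}_j^{(t)}, \mathbf{v}_{\text{cal},j}^{(t)}) \le Q_t$ holds if and only if $l_j^{(t)} \in \mathcal{C}_j^{(t)}$. By~\eqref{eq:conf_set}, a label $l$ lies in $\mathcal{C}_j^{(t)}$ exactly when $l \in \mathcal{V}_t$ and $s(\mathbf{v}_j^{(t)}, \psi(l)) \le Q_t$. Substituting $l = l_j^{(t)}$, the membership condition $l_j^{(t)} \in \mathcal{V}_t$ is automatic by the post-discovery assumption. What remains is precisely the score inequality.

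The second step passes to the neighborhood. Since $\mathcal{N}_r(x_t)$ is a finite set of grid cells, the maximum defining $S_t$ is attained. It then suffices to note that $\max_{j} a_j \le Q_t$ holds if and only if $a_j \le Q_t$ for all $j$. The forward direction holds because each $a_j \le \max_j a_j$, and the reverse because the maximum equals some $a_{j^\ast}$. Combining this with the first step gives the equivalence for all $j$ simultaneously. This argument also holds when $Q_t = \pm\infty$, as in the proof of Lemma~\ref{lem:bound_alpha}, with the convention that finite scores are $\le +\infty$ and $> -\infty$.

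The only real subtlety, and the step I would flag, is the role of the post-discovery assumption in the reverse direction of the first step. Without it, $l_j^{(t)} \notin \mathcal{V}_t$ could hold while the score is still $\le Q_t$, and then $l_j^{(t)} \notin \mathcal{C}_j^{(t)}$, so the equivalence would fail. I would therefore state explicitly that the lemma is claimed under $l_j^{(t)} \in \mathcal{V}_t$ for all safety-relevant cells. I would also check that the embeddings used in $S_t$ and in $\mathcal{C}_j^{(t)}$ are the same $\mathbf{v}_j^{(t)}$, which is guaranteed by the map update~\eqref{eq:vlm_update} and the observation-coverage assumption preceding Theorem~\ref{thm:safety}. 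Beyond these points the proof needs no probabilistic argument.
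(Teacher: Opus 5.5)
Your proof is correct and is the same definitional unfolding the paper uses: the maximum over the finite neighborhood, then the sublevel-set definition of $\mathcal{C}_j^{(t)}$ with $\mathbf{v}_{\text{cal},j}^{(t)}=\psi(l_j^{(t)})$. You also state explicitly the post-discovery assumption $l_j^{(t)}\in\mathcal{V}_t$, which the paper's second equivalence uses without saying so. One small slip: that assumption is needed for the direction from the score inequality to set membership (the forward direction of the lemma), not the reverse direction, as your own counterexample shows, because membership in $\mathcal{C}_j^{(t)}$ implies the score bound by definition alone.
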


\begin{proof}
By definition, $S_t = \max_{j \in \mathcal{N}_r(x_t)} s(\mathbf{v}_j^{(t)}, \mathbf{v}_{\text{cal},j}^{(t)})$. Hence:
\begin{align}
    S_t \le Q_t &\iff s(\mathbf{v}_j^{(t)}, \mathbf{v}_{\text{cal},j}^{(t)}) \le Q_t \quad \forall\, j \in \mathcal{N}_r(x_t) \\
    &\iff l_j^{(t)} \in \{l \in \mathcal{V}_t : s(\mathbf{v}_j^{(t)}, \psi(l)) \le Q_t\} = \mathcal{C}_j^{(t)} \quad \forall\, j \in \mathcal{N}_r(x_t). \qedhere
\end{align}
\end{proof}

\noindent\textit{Proof of Theorem~\ref{thm:safety}.}\;
Let $E = \{S_t \le Q_t\}$, so $\mathbb{P}(E) \ge 1 - \alpha_t'$ by condition~(i). We show that $E$ implies no safety violation.

\textit{Step 1: Coverage implies worst-case dominance.}\;
By Lemma~\ref{lem:joint_coverage_app}, event $E$ implies $l_j^{(t)} \in \mathcal{C}_j^{(t)}$ for all $j \in \mathcal{N}_r(x_t)$. For any such $j$:
\begin{equation}
    D_{\text{safe}}(l_j^{(t)}) \le \max_{l \in \mathcal{C}_j^{(t)}} D_{\text{safe}}(l) = d_{\text{wc}}^{(j,t)}.
    \label{eq:dsafe_dwc}
\end{equation}

\textit{Step 2: Worst-case planning ensures safety.}\;
Consider any $j \in \mathcal{N}_r(x_t)$ with $D_{\text{safe}}(l_j^{(t)}) > 0$. By reliable obstacle detection, $j \in \mathcal{J}_{\text{obs}}^{(t)}$. Condition~(ii) gives $\|x_t - c_j\| \ge d_{\text{wc}}^{(j,t)}$. Combining with~\eqref{eq:dsafe_dwc}:
\begin{equation}
    \|x_t - c_j\| \ge d_{\text{wc}}^{(j,t)} \ge D_{\text{safe}}(l_j^{(t)}).
\end{equation}
For cells with $D_{\text{safe}}(l_j^{(t)}) = 0$, the safety constraint is trivially satisfied.

\textit{Step 3: Probability bound.}\;
The violation event $V = \{\exists\, j \in \mathcal{N}_r(x_t) : \|x_t - c_j\| < D_{\text{safe}}(l_j^{(t)})\}$ satisfies $V \subseteq \neg E$ by Steps~1--2. Therefore $\mathbb{P}(V) \le \mathbb{P}(\neg E) = 1 - \mathbb{P}(E) \le \alpha_t'$. \hfill$\square$

\subsection{Proof of Corollary~\ref{cor:mission} (Long-Run Average Safety)}
\label{app:proof_cor}

The proof connects Theorem~\ref{thm:safety} (per-step) with Proposition~\ref{prop:tracking} (long-run) via two intermediate results.

\begin{mylem}[\textbf{Per-Step Violation Bound}]
\label{lem:per_step_violation}
Whenever the planner enforces $\|x_t - c_j\| \ge d_{\emph{wc}}^{(j,t)}$ for all $j \in \mathcal{J}_{\emph{obs}}^{(t)} \cap \mathcal{N}_r(x_t)$:
\begin{equation}
    \mathbb{P}(V_t \mid Q_t) \le 1 - F_t(Q_t) + \epsilon,
\end{equation}
where $V_t = \{\exists\, j \in \mathcal{N}_r(x_t) : \|x_t - c_j\| < D_{\emph{safe}}(l_j^{(t)})\}$.
\end{mylem}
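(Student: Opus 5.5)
The plan is to reuse the deterministic inclusion from the proof of Theorem~\ref{thm:safety} and then pass from the unconditional probability to a probability conditioned on the threshold. Assumption \textbf{A2} pays for that conditioning step.

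\emph{Step 1: event inclusion.} Fix $t$ and let $E_t = \{S_t \le Q_t\}$. By Lemma~\ref{lem:joint_coverage_app}, $E_t$ means $l_j^{(t)} \in \mathcal{C}_j^{(t)}$ for every $j \in \mathcal{N}_r(x_t)$. Hence $D_{\text{safe}}(l_j^{(t)}) \le d_{\text{wc}}^{(j,t)}$ for each such $j$.

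Now take any cell with $D_{\text{safe}}(l_j^{(t)})>0$. Reliable obstacle detection puts it in $\mathcal{J}_{\text{obs}}^{(t)}$, and the planner's constraint gives $\|x_t-c_j\|\ge d_{\text{wc}}^{(j,t)}\ge D_{\text{safe}}(l_j^{(t)})$. Cells with $D_{\text{safe}}=0$ are trivially safe. This is exactly Steps~1--2 of the proof of Theorem~\ref{thm:safety}, and it yields the pathwise inclusion $V_t \subseteq \{S_t > Q_t\}$. The inclusion holds for every realization, so it survives any conditioning:
\[
\mathbb{P}(V_t \mid Q_t) \le \mathbb{P}(S_t > Q_t \mid Q_t).
\]

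\emph{Step 2: conditional to marginal.} Next I bound $\mathbb{P}(S_t > Q_t \mid Q_t)$ by $1-F_t(Q_t)+\epsilon$ using the second clause of Assumption \textbf{A2}. That clause controls $|\mathbb{P}(S_t>Q_t\mid\mathcal{F}_{t-1})-(1-F_t(Q_t))|\le\epsilon$.

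I would link the two conditionings as follows. $Q_t=\hat{Q}(1-\alpha_t)$, and $\alpha_t$ is a deterministic function of $\alpha_1$ and $\mathrm{err}_1,\dots,\mathrm{err}_{t-1}$. So $\alpha_t$ is $\mathcal{F}_{t-1}$-measurable, and I will take the empirical quantile map to be determined by the same history. Then $\sigma(Q_t)\subseteq\mathcal{F}_{t-1}$.

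By the tower property,
\[
\mathbb{P}(S_t>Q_t\mid Q_t)=\mathbb{E}\bigl[\mathbb{P}(S_t>Q_t\mid\mathcal{F}_{t-1})\mid Q_t\bigr]\le \mathbb{E}\bigl[1-F_t(Q_t)+\epsilon\mid Q_t\bigr]=1-F_t(Q_t)+\epsilon.
\]
The last equality uses that $1-F_t(Q_t)$ is $\sigma(Q_t)$-measurable. Combining with Step~1 gives the lemma.

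\emph{Main obstacle.} The delicate point is Step~2's measurability claim. The historical scores feeding $\hat{Q}$ are not literally in $\sigma(\mathrm{err}_1,\dots,\mathrm{err}_{t-1})$, so I must either enlarge $\mathcal{F}_{t-1}$ to include past scores or read $Q_t$ as a function of $\alpha_t$ with a fixed quantile map. Either way, A2 must be interpreted with the filtration that makes $Q_t$ predictable.

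I would state this interpretation explicitly. If it fails, the fallback is to condition on $\mathcal{F}_{t-1}$ directly, where A2 applies verbatim, and note that $Q_t$ is measurable there. The per-step bound then reads with $\mathcal{F}_{t-1}$ in place of $Q_t$, which suffices for averaging in Corollary~\ref{cor:mission}.
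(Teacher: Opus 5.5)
Your proposal is correct and follows the paper's proof step for step: first the pathwise inclusion $V_t \subseteq \{S_t > Q_t\}$ taken from Theorem~\ref{thm:safety}, then the tower property through $\mathcal{F}_{t-1}$ combined with the second clause of \textbf{A2}. The measurability issue you flag is real, since the paper simply asserts that $Q_t$ is $\mathcal{F}_{t-1}$-measurable because ``it depends only on past errors'' and ignores that $\hat{Q}$ also depends on past scores; your explicit choice of filtration, together with your use of the $\sigma(Q_t)$-measurability of $1-F_t(Q_t)$ in the last equality (where the paper cites $\mathcal{F}_{t-1}$-measurability), makes the argument slightly more precise than the original.
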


\begin{proof}
By the argument in Theorem~\ref{thm:safety}, $V_t \subseteq \{S_t > Q_t\}$. Since $Q_t$ is $\mathcal{F}_{t-1}$-measurable (it depends only on past errors):
\begin{equation}
    \mathbb{P}(V_t \mid Q_t) \le \mathbb{P}(S_t > Q_t \mid Q_t).
\end{equation}
By the tower property of conditional expectation:
\begin{equation}
    \mathbb{P}(S_t > Q_t \mid Q_t) = \mathbb{E}\!\left[\mathbb{P}(S_t > Q_t \mid \mathcal{F}_{t-1}) \;\middle|\; Q_t\right].
\end{equation}
Applying the second condition of Assumption \textbf{A2}:
\begin{equation}
    \mathbb{P}(S_t > Q_t \mid \mathcal{F}_{t-1}) \le (1 - F_t(Q_t)) + \epsilon.
\end{equation}
Since the right-hand side is $\mathcal{F}_{t-1}$-measurable, the tower property preserves the bound:
\begin{equation}
    \mathbb{P}(V_t \mid Q_t) \le (1 - F_t(Q_t)) + \epsilon. \qedhere
\end{equation}
\end{proof}

\noindent\textit{Proof of Corollary~\ref{cor:mission}.}\;
Averaging Lemma~\ref{lem:per_step_violation} over $t = 1, \dots, T$:
\begin{equation}
    \frac{1}{T}\sum_{t=1}^{T} \mathbb{P}(V_t \mid Q_t) \le \frac{1}{T}\sum_{t=1}^{T} \bigl(1 - F_t(Q_t)\bigr) + \epsilon.
\end{equation}
Taking $\limsup$ and applying Proposition~\ref{prop:tracking}:
\begin{align}
    \limsup_{T \to \infty}\; \frac{1}{T}\sum_{t=1}^{T} \mathbb{P}(V_t \mid Q_t) &\le \limsup_{T \to \infty}\; \frac{1}{T}\sum_{t=1}^{T} \bigl(1 - F_t(Q_t)\bigr) + \epsilon \\
    &\le (\alpha + \tau\Delta + \epsilon) + \epsilon = \alpha + \tau\Delta + 2\epsilon.
\end{align}
The factor of $2\epsilon$ arises from two independent applications of Assumption \textbf{A2}: once in Proposition~\ref{prop:tracking} (calibration-side, via Lemma~\ref{lem:cond_marginal}) and once in Lemma~\ref{lem:per_step_violation} (planning-side). \hfill$\square$

\section{Algorithm}
\label{app:algorithm}

\begin{algorithm}[H]
\caption{CoSMO: Conformal Semantic Mapping for Open-World Safe Robot Navigation}
\label{alg:cosmo}
\small
\hspace*{0.02in} {\bf Input:}
Feature map $\{\mathbf{v}_j^{(t)}\}$; Robot state $x_t$; Dictionary $(\mathcal{V}_t, D_{\text{safe}})$; Threshold $Q_t$; Risk $\alpha_t$; Buffer $\mathcal{S}$; Step size $\gamma$; Target $\alpha$; Radius $r$\\
\hspace*{0.02in} {\bf Output:}
Updated $(\mathcal{V}_{t+1}, Q_{t+1}, \alpha_{t+1})$; Worst-case safety map
\begin{algorithmic}[1]
\State \textit{// Phase 1: Expert oracle feedback \& dictionary expansion}
\State Expert reveals GT labels $\{l_j^{(t-\tau)}\}_{j \in \mathcal{N}_r(x_{t-\tau})}$ for neighborhood at $t - \tau$
\For{each $j \in \mathcal{N}_r(x_{t-\tau})$}
    \If{$l_j^{(t-\tau)} \notin \mathcal{V}_t$} \Comment{Open-vocabulary discovery}
        \State Compute $\mathbf{v}_{\text{cal},j}^{(t-\tau)} \gets \psi(l_j^{(t-\tau)})$; assign $D_{\text{safe}}(l_j^{(t-\tau)})$ from expert
        \State $\mathcal{V}_t \gets \mathcal{V}_t \cup \{l_j^{(t-\tau)}\}$ \Comment{Monotonic expansion}
    \EndIf
\EndFor
\State \textit{// ACI calibration with neighborhood score}
\State $S \gets \max_{j \in \mathcal{N}_r(x_{t-\tau})} s(\mathbf{v}_j^{(t-\tau)}, \mathbf{v}_{\text{cal},j}^{(t-\tau)})$ \Comment{Neighborhood NCS}
\State $\text{err} \gets \mathbb{I}(S > Q_t)$
\State $\alpha_t \gets \alpha_t + \gamma(\alpha - \text{err})$
\State $\mathcal{S} \gets \mathcal{S} \cup \{S\}$;\; $Q_t \gets \hat{Q}(\mathcal{S},\; 1 - \alpha_t)$
\vspace{0.05in}
\State \textit{// Phase 2: Coverage sets \& worst-case safety}
\For{each obstacle cell $j \in \mathcal{J}_{\text{obs}} \cap \mathcal{N}_r(x_t)$}
    \State $\mathcal{C}_j \gets \{l \in \mathcal{V}_t \mid s(\mathbf{v}_j,\psi(l)) \le Q_t\}$ \Comment{Coverage set}
    \If{$\mathcal{C}_j = \emptyset$}
        \State $d_{\text{wc}}^{(j)} \gets 0$ \Comment{No semantic dilation before discovery}
    \Else
        \State $d_{\text{wc}}^{(j)} \gets \max_{l \in \mathcal{C}_j} D_{\text{safe}}(l)$ \Comment{Worst-case radius}
    \EndIf
    \State Dilate cell $j$ by $d_{\text{wc}}^{(j)}$ on obstacle map
\EndFor
\end{algorithmic}
\end{algorithm}

\section{Obstacle Safety Dictionary}
\label{app:obstacle_dict}

Because semantic safety distances are not standardized in the literature and the choice of dictionary is a deliberate modeling decision, we consider four dictionary variants for the 21 MP3D goal categories~\citep{chang2017matterport3d}: V1--V3 for dictionary ablation and V1/V4 for safety-distance sensitivity analysis. The grid resolution is $0.1$\,m/cell, so a radius of 5 cells corresponds to $0.5$\,m physical clearance.
\begin{itemize}
\item \textbf{V1 (5-hazard, default):} 5 categories that frequently appear across the dataset scenes---\texttt{table}, \texttt{cabinet}, \texttt{chest\_of\_drawers}, \texttt{shower}, \texttt{tv\_monitor}---receive nonzero safety distances with heterogeneous radii reflecting their size and risk; the remaining 16 categories have $D_{\text{safe}} = 0$. \textbf{This is the dictionary used in all main experiments unless otherwise noted.}
\item \textbf{V2 (21-hazard, mixed):} all 21 categories are treated as hazards; the 5 representative categories keep their V1 radii and the other 16 each receive a small radius of 2 cells.
\item \textbf{V3 (21-hazard, uniform):} all 21 categories are treated as hazards with a uniform radius of 5 cells, matching the maximum radius among the V1 hazards.
\item \textbf{V4 (5-hazard, low-frequency):} the five least frequent categories---\texttt{bathtub}, \texttt{clothes}, \texttt{gym\_equipment}, \texttt{seating}, and \texttt{fireplace}---each receive a nominal safety distance of 4 cells ($0.4$\,m); the remaining 16 categories have $D_{\text{safe}}=0$. V4 is used in the sensitivity analysis below.
\end{itemize}

\begin{table}[H]
\caption{Obstacle safety dictionaries (top: per-category safety distances $D_{\text{safe}}$ in grid cells, $0.1$\,m/cell) and results on 330 MP3D \texttt{val\_ablation} episodes (bottom: V1--V3 dictionary ablation and the unperturbed V4 sensitivity setting, with CoSMO using $\alpha=0.1$ and $\gamma=0.05$). \textbf{Bold} entries: 5 representative hazard categories shared by V1 and V2 (top).}
\label{tab:obstacle_dict}
\centering
\small
    \setlength{\tabcolsep}{3pt}
    \vspace{1em}
\begin{tabular}{@{}lcccc|lcccc@{}}
\toprule
\textbf{Category} & V1 & V2 & V3 & V4 & \textbf{Category} & V1 & V2 & V3 & V4 \\
\midrule
chair                       & 0 & 2 & 5 & 0 & stool                       & 0 & 2 & 5 & 0 \\
\textbf{table}              & \textbf{5} & \textbf{5} & 5 & 0 & towel                       & 0 & 2 & 5 & 0 \\
picture                     & 0 & 2 & 5 & 0 & \textbf{tv\_monitor}        & \textbf{2} & \textbf{2} & 5 & 0 \\
\textbf{cabinet}            & \textbf{4} & \textbf{4} & 5 & 0 & \textbf{shower}             & \textbf{3} & \textbf{3} & 5 & 0 \\
cushion                     & 0 & 2 & 5 & 0 & counter                     & 0 & 2 & 5 & 0 \\
sofa                        & 0 & 2 & 5 & 0 & bathtub                     & 0 & 2 & 5 & 4 \\
bed                         & 0 & 2 & 5 & 0 & fireplace                   & 0 & 2 & 5 & 4 \\
\textbf{chest\_of\_drawers} & \textbf{4} & \textbf{4} & 5 & 0 & gym\_equipment              & 0 & 2 & 5 & 4 \\
plant                       & 0 & 2 & 5 & 0 & seating                     & 0 & 2 & 5 & 4 \\
sink                        & 0 & 2 & 5 & 0 & clothes                     & 0 & 2 & 5 & 4 \\
toilet                      & 0 & 2 & 5 & 0 &                             &   &   &   &   \\
\midrule
\multicolumn{10}{c}{\textit{Results on 330 MP3D \texttt{val\_ablation} episodes}} \\
\midrule
\multicolumn{2}{l}{\textbf{Metric}}         & \multicolumn{2}{c}{V1}              & \multicolumn{2}{c}{V2}     & \multicolumn{2}{c}{V3}     & \multicolumn{2}{c}{V4} \\
\midrule
\multicolumn{2}{l}{SR (\%) $\uparrow$}      & \multicolumn{2}{c}{12.7}   & \multicolumn{2}{c}{11.6}   & \multicolumn{2}{c}{9.1}   & \multicolumn{2}{c}{20.3} \\
\multicolumn{2}{l}{SPL $\uparrow$}          & \multicolumn{2}{c}{0.072}  & \multicolumn{2}{c}{0.063}  & \multicolumn{2}{c}{0.055} & \multicolumn{2}{c}{0.107} \\
\multicolumn{2}{l}{SCR (\%) $\downarrow$}   & \multicolumn{2}{c}{10.3}   & \multicolumn{2}{c}{10.1}   & \multicolumn{2}{c}{9.1}   & \multicolumn{2}{c}{4.8} \\
\multicolumn{2}{l}{STUCK (\%) $\downarrow$} & \multicolumn{2}{c}{43.9}   & \multicolumn{2}{c}{47.6}   & \multicolumn{2}{c}{57.3}  & \multicolumn{2}{c}{36.7} \\
\bottomrule
\end{tabular}

\smallskip
\vspace{1em}
\noindent The remaining 19 mpcat40 categories are structural or non-object classes excluded from semantic safety evaluation: \textit{void, wall, floor, door, window, curtain, stairs, ceiling, mirror, column, lighting, beam, railing, shelving, blinds, board\_panel, furniture, appliances, objects}.
\end{table}

\paragraph{Dictionary ablation.}
The bottom half of \Cref{tab:obstacle_dict} reports CoSMO's results under the V1--V4 setting. Across V1--V3, the semantic collision rate stays essentially flat ($9.1$--$10.3\%$). Success rate degrades, however, when uniform large radii are applied to all 21 categories (V3, SR $9.1\%$ vs.\ V1 $12.7\%$) because dilated objects block narrow passages. V1 attains the best SR/SPL among V1--V3 at a similar SCR level and remains the default dictionary. 

\subsection{Sensitivity to Safety Distances}
\label{app:dsafe_sensitivity}

Safety distances represent user-specified clearances. We assess sensitivity to their specification by perturbing the planning distances by $\pm1$ grid cell ($0.1$\,m) on the 330-episode ablation split, while evaluating SCR against the unchanged reference distances for each dictionary. We compare the default V1 dictionary with the low-frequency V4 dictionary defined above; its nominal 4-cell distances are perturbed to 3 or 5 cells.

\Cref{tab:dsafe_sensitivity} shows mild sensitivity for V1: underestimating the distances increases SCR from $10.3\%$ to $11.2\%$, while overestimating them reduces SCR to $10.0\%$ at a cost in SR and a higher STUCK rate. V4 is nearly unaffected by these perturbations, indicating that misspecifying infrequent categories has less influence on navigation performance.

\begin{table}[t]
    \centering
    \caption{Sensitivity to safety distance perturbations on 330 ablation episodes. Perturbations are in grid cells; SCR is evaluated using unchanged reference distances.}
    \label{tab:dsafe_sensitivity}
    \small
    \setlength{\tabcolsep}{5pt}
    \begin{tabular}{@{}lcrrrr@{}}
        \toprule
        Dict. & Perturbation & SR(\%) & SCR(\%) & SPL & STUCK(\%) \\
        \midrule
        V1 & $-1$ & 13.3 & 11.2 & 0.077 & 40.9 \\
        V1 & $0$  & 12.7 & 10.3 & 0.072 & 43.6 \\
        V1 & $+1$ & 11.5 & 10.0 & 0.068 & 47.3 \\
        \midrule
        V4 & $-1$ & 20.3 & 4.8 & 0.108 & 37.6 \\
        V4 & $0$  & 20.3 & 4.8 & 0.107 & 36.7 \\
        V4 & $+1$ & 20.0 & 5.2 & 0.103 & 36.7 \\
        \bottomrule
    \end{tabular}
\end{table}

\section{Implementation Details}
\label{app:implementation}

\Cref{tab:implementation} summarizes the models, software versions, and hyperparameters used in all experiments. All experiments were conducted on a workstation with 2$\times$ NVIDIA RTX 4090 GPUs (24\,GB VRAM each).
\begin{table}[H]
\caption{Models, software, and hyperparameters.}
\label{tab:implementation}
\centering
\small
\begin{tabular}{@{}ll@{}}
\toprule
\textbf{Component} & \textbf{Specification} \\
\midrule
\multicolumn{2}{@{}l}{\textit{Vision--Language Models}} \\
Navigation backbone & ConvNeXt-Large-d-320 (LAION-2B) \\
Obstacle detection (GCLIP) & ViT-B/16 (OpenAI CLIP) \\
Geometric detection & YOLO-World/L \\
Noisy expert ablation & Grounding-DINO (Swin-T) \\
\midrule
\multicolumn{2}{@{}l}{\textit{Simulator and Platform}} \\
Simulator & Habitat-Sim 0.2.2 \\
Dataset & Matterport3D (MP3D) \\
PyTorch & 2.4.1 (CUDA 12.1) \\
OpenCLIP & custom fork (SED) \\
Hardware & 2$\times$ NVIDIA RTX 4090 \\
\midrule
\multicolumn{2}{@{}l}{\textit{CoSMO Hyperparameters}} \\
Target miscoverage $\alpha$ & 0.1 \\
ACI step size $\gamma$ & 0.05 \\
Argmax cosine threshold $q_{\text{fixed}}$ & 0.30 \\
Grid resolution & 0.1\,m/cell \\
Max safety radius $\bar{D}$ & 5 cells (0.5\,m) \\
Neighborhood radius $r$ & 6 cells (0.6\,m) \\
\bottomrule
\end{tabular}
\end{table}

\section{Additional Experimental Results}
\label{app:experiments}

\subsection{Detailed Analysis of Main Results}
\label{app:analysis}

\textbf{conformal calibration reduces collisions.}\quad
Argmax assigns the safety radius of a single predicted label, whereas CoSMO considers all plausible labels within the calibrated threshold $Q_t$ and plans against the most conservative radius in the coverage set. This worst-case planning mechanism is the source of the $25.2\%$ relative SCR reduction from B to D. The SR cost reflects the inherent safety--efficiency tradeoff: wider coverage sets produce more conservative dilation, occasionally blocking possible paths.

\textbf{Uniform dilation is inefficient.}\quad
The ``Uniform dilation'' baseline applies $\bar{D}$ to \emph{every} detected obstacle regardless of its semantic label. It achieves comparable SCR to CoSMO ($9.1\%$ vs.\ $8.2\%$ at $80\%$ coverage) but at steep SR cost ($9.1\%$ vs.\ $12.4\%$). This is because most obstacles require smaller clearance than $\bar{D}$; CoSMO assigns label-specific radii, avoiding unnecessary over-dilation.

\textbf{More Known- vs.\ open-vocabulary SR difference.}\quad
Condition E (known-vocabulary) has lower SR than D ($10.4\%$ vs.\ $13.8\%$). This is because E applies semantic dilation from the first step, whereas D uses only geometric obstacle avoidance before labels are discovered, which is less restrictive and allows the robot to pass through narrow spaces early in the episode.

\subsection{Scene-dependent Safety--Success Tradeoff}
\label{app:scene_tradeoff}

\Cref{tab:scene_comparison} compares GCLIP Argmax and CoSMO on 200 episodes in each of two MP3D scenes, whose layouts are shown in \Cref{fig:scene_layouts}. In the more open scene \texttt{2azQ1b91cZZ}, CoSMO reduces SCR from $12.5\%$ to $7.5\%$ ($40\%$ relative reduction), while SR changes only from $14.5\%$ to $14.0\%$. In the more constrained scene \texttt{TbHJrupSAjP}, SCR decreases from $26.0\%$ to $23.0\%$, accompanied by a larger SR decrease from $24.5\%$ to $20.0\%$. The reported average STUCK rate is higher in the latter scene ($42.3\%$ vs.\ $26.0\%$). These comparisons suggest that alternative free-space routes can reduce the success-rate cost of semantic dilation.

\begin{table}[t]
    \centering
    \caption{Scene-level comparison on 200 episodes per scene. All rates are percentages.}
    \label{tab:scene_comparison}
    \small
    \setlength{\tabcolsep}{4pt}
    \begin{tabular}{@{}lcrrrrr@{}}
        \toprule
        & & \multicolumn{2}{c}{SR} & \multicolumn{2}{c}{SCR} & \\
        \cmidrule(lr){3-4}\cmidrule(lr){5-6}
        Scene & Episodes & Argmax & CoSMO & Argmax & CoSMO & Avg.STUCK \\
        \midrule
        \texttt{2azQ1b91cZZ} & 200 & 14.5 & 14.0 & 12.5 & 7.5 & 26.0 \\
        \texttt{TbHJrupSAjP} & 200 & 24.5 & 20.0 & 26.0 & 23.0 & 42.3 \\
        \bottomrule
    \end{tabular}
\end{table}

\begin{figure}[t]
    \centering
    \begin{subfigure}[c]{0.48\linewidth}
        \centering
        \includegraphics[width=\linewidth,height=0.18\textheight,keepaspectratio]{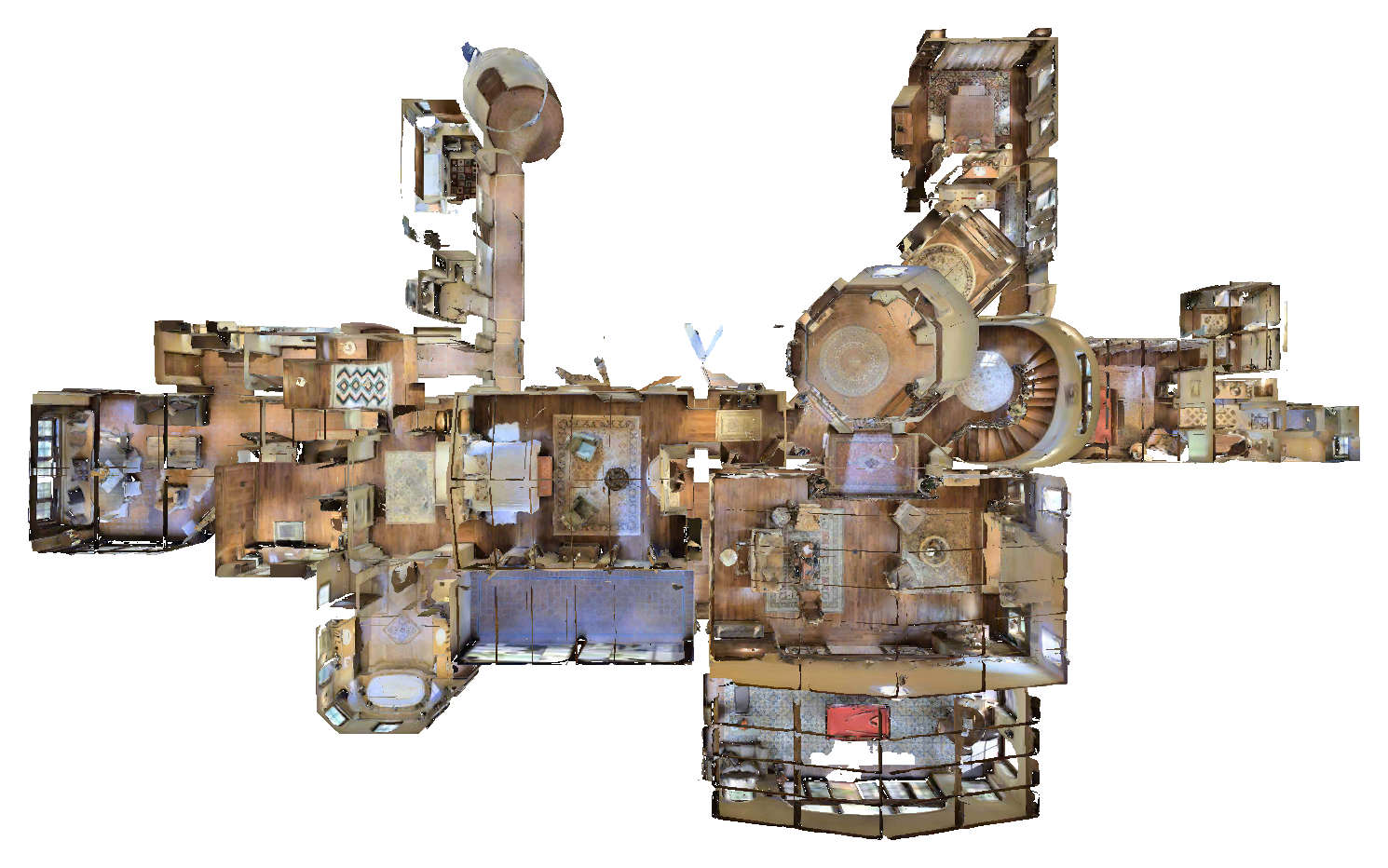}
        \caption{\texttt{2azQ1b91cZZ}}
    \end{subfigure}\hfill
    \begin{subfigure}[c]{0.48\linewidth}
        \centering
        \includegraphics[width=\linewidth,height=0.08\textheight,keepaspectratio]{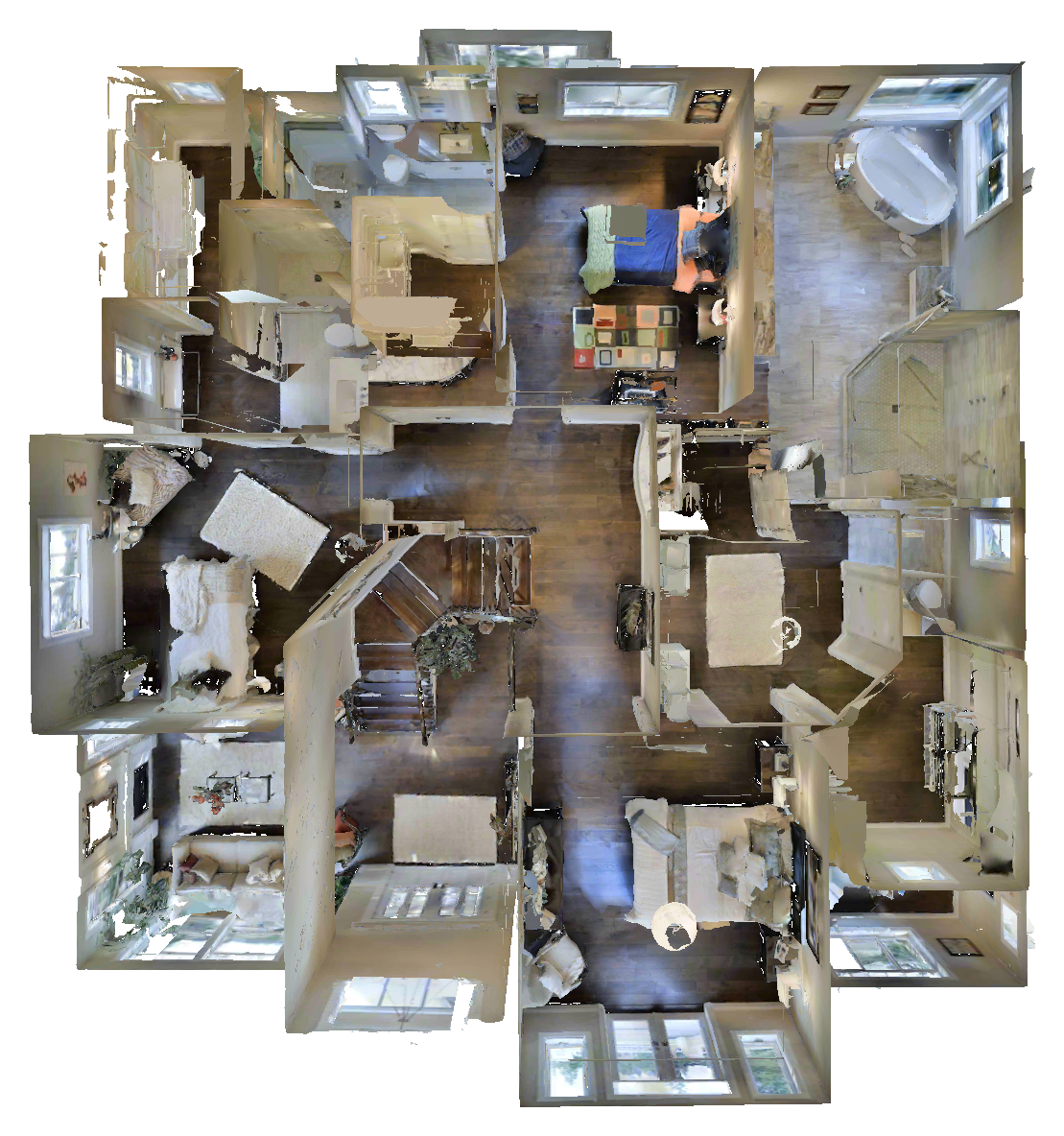}
        \caption{\texttt{TbHJrupSAjP}}
    \end{subfigure}
    \caption{Top-down views of the scenes in \Cref{tab:scene_comparison}: a more open layout (left) and a more constrained layout (right).}
    \label{fig:scene_layouts}
\end{figure}

\subsection{Nonconformity Score Function Ablation: Closed-Set vs.\ CoSMO}
\label{subsec:score_ablation}

\begin{wrapfigure}{r}{0.33\columnwidth}
    \centering
    \vspace{-1em}
    \includegraphics[width=0.33\columnwidth]{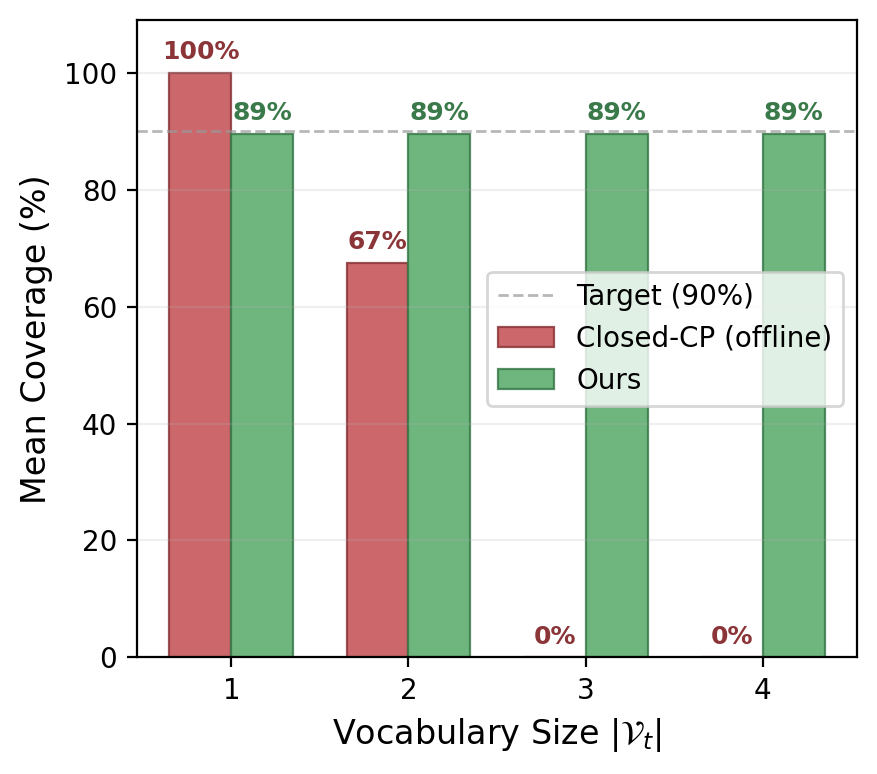}
    \caption{Empirical coverage vs.\ vocabulary size.}
    \label{fig:score_ablation}
    \vspace{-5em}
\end{wrapfigure}

We compare CoSMO against Closed-CP~\citep{sundarsingh2026safe}, which uses softmax-based nonconformity scores with an offline-calibrated threshold. \Cref{fig:score_ablation} shows empirical coverage as $|\mathcal{V}_t|$ grows from 1 to 4. Closed-CP degrades from $100\%$ to $0\%$ because each new label inflates the softmax denominator. CoSMO maintains $\approx 90\%$ coverage, as the pairwise cosine score is structurally independent of $|\mathcal{V}_t|$.

\subsection{Sensitivity to Expert Oracle}
\label{app:expert_quality}

\textbf{Expert delay.}\quad
\Cref{tab:delay_ablation} shows the effect of expert delay $\tau \in \{0, 5, 10, 20, 100\}$ on CoSMO. SCR stays within $10.0$--$10.9\%$ across all delay values, confirming that the asymptotic calibration guarantee (Proposition~\ref{prop:coverage_delay}) holds in practice. SR and SPL also show no systematic degradation with $\tau$.

\textbf{Expert accuracy.}\quad
\Cref{fig:calib_comparison} provides a detailed diagnostic comparison between the GT oracle and the GDINO oracle across the full calibration pipeline.

\begin{table}[H]
\caption{Delay ablation on 330 episodes. SCR and SPL show no systematic dependence on $\tau$, consistent with Proposition~\ref{prop:coverage_delay}.}
\label{tab:delay_ablation}
\centering
\begin{tabular}{lccc}
\toprule
$\tau$ & \textbf{SR} $\uparrow$ & \textbf{SCR} $\downarrow$ & \textbf{SPL} $\uparrow$ \\
\midrule
0   & 12.7\% & 10.3\% & 0.071 \\
5   & 11.8\% & 10.0\% & 0.067 \\
10  & 13.0\% & 10.3\% & 0.066 \\
20  & 14.2\% & 10.9\% & 0.076 \\
100 & 14.2\% & 10.0\% & 0.075 \\
\bottomrule
\end{tabular}
\end{table}

\begin{figure}[H]
\centering
\includegraphics[width=0.24\linewidth]{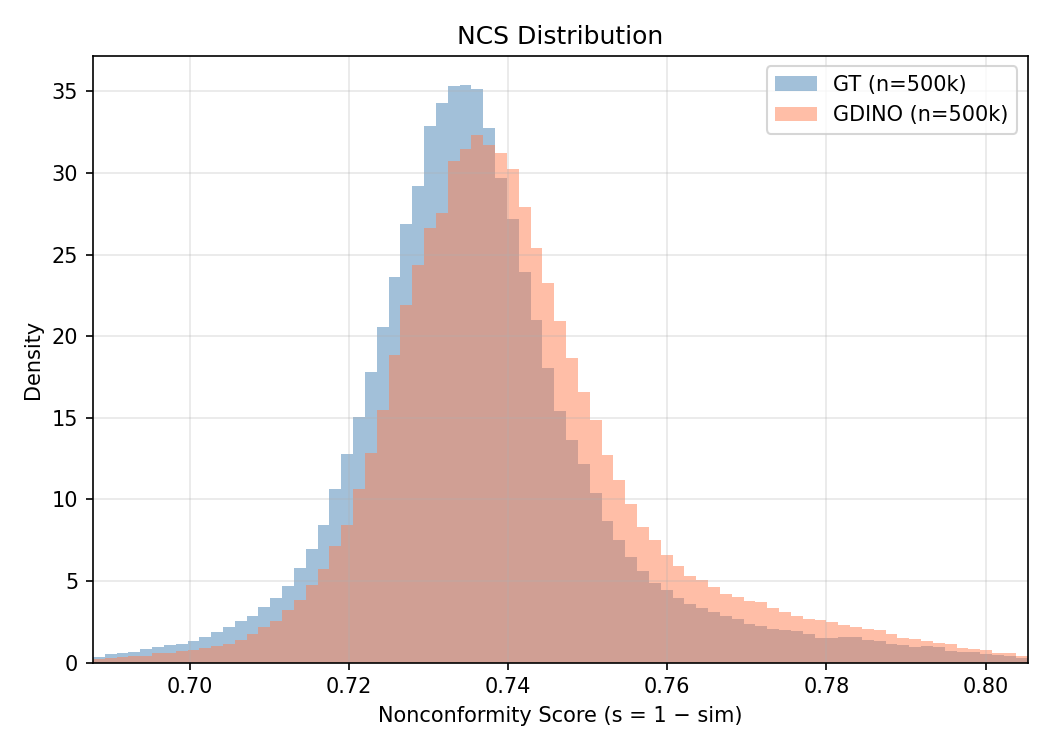}\hfill
\includegraphics[width=0.24\linewidth]{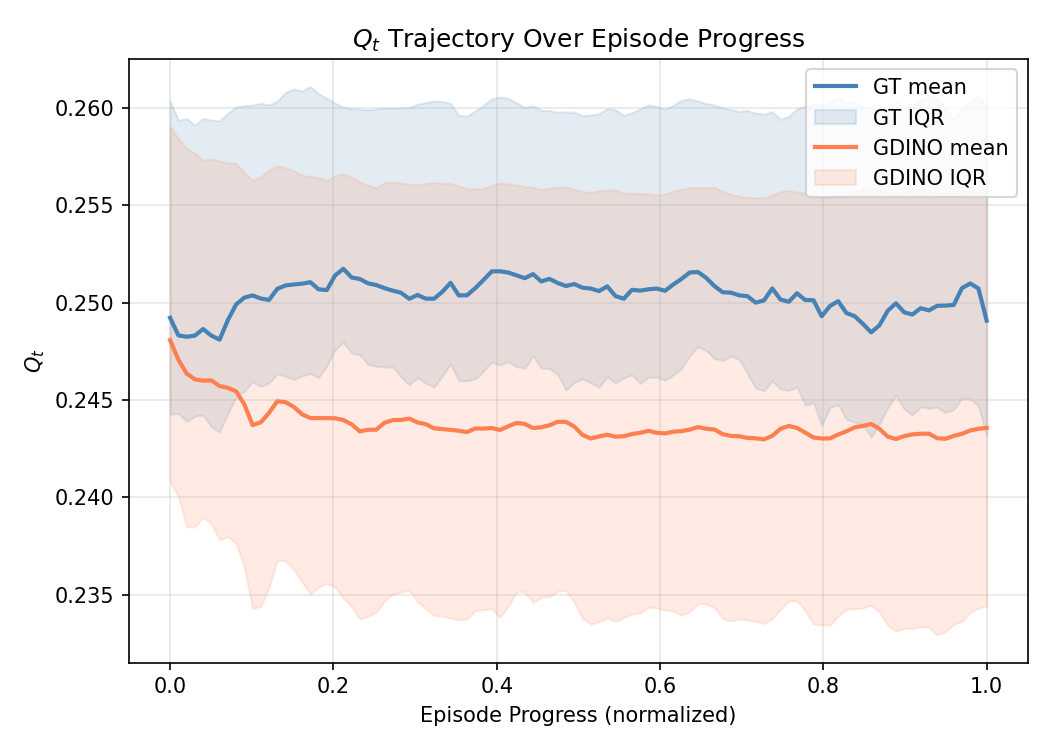}\hfill
\includegraphics[width=0.24\linewidth]{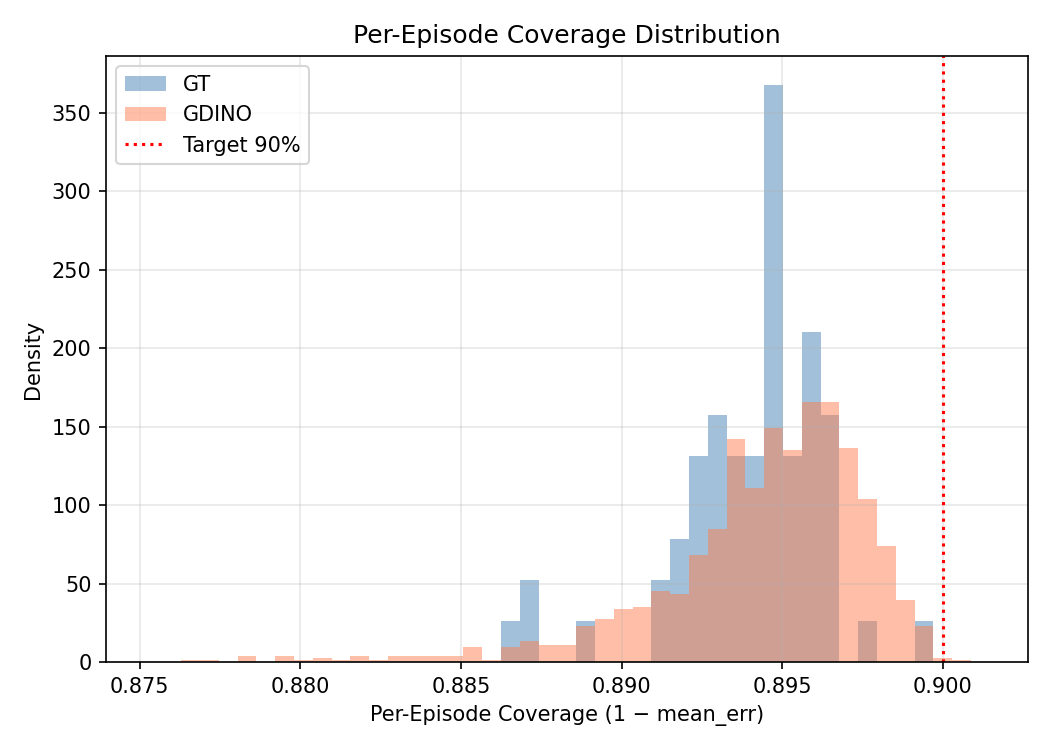}\hfill
\includegraphics[width=0.24\linewidth]{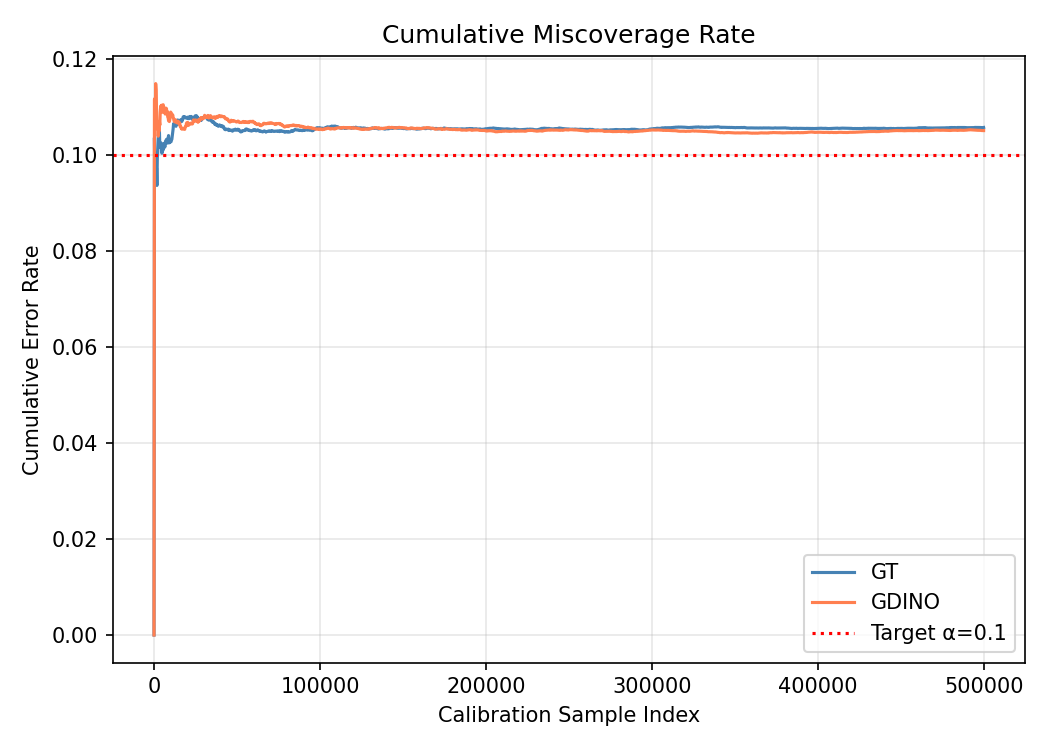}
\caption{GT oracle vs.\ Grounding-DINO oracle on nonconformity score distribution, $Q_t$ trajectory, per-episode coverage, and real-time miscoverage.} 
\label{fig:calib_comparison}
\end{figure}


Across all four diagnostics, the GT and GDINO oracles behave nearly identically: \textbf{(a)}~NCS distributions overlap closely; \textbf{(b)}~$Q_t$ trajectories both stabilize after early transients; \textbf{(c)}~per-episode coverage distributions match the $90\%$ target; and \textbf{(d)}~cumulative miscoverage converges to $\alpha = 0.1$ for both.
Despite the detector noise introduced by Grounding-DINO, the end-to-end coverage and error rates are virtually identical, confirming that Grounding-DINO is a viable drop-in replacement for the GT oracle.

\subsection{Coverage Target and Pareto Frontier}

\begin{table}[H]
\centering
\begin{minipage}[b]{0.51\columnwidth}
    \centering
    \setlength{\tabcolsep}{3pt}
    \renewcommand{\arraystretch}{0.95}
    \begin{tabular}{@{}llccc@{}}
    \toprule
    \textbf{Method} & \textbf{Cov.} & \textbf{SR}$\uparrow$ & \textbf{SCR}$\downarrow$ & \textbf{SPL}$\uparrow$ \\
    \midrule
    Baseline & --- & 8.5\% & 31.2\% & 0.053 \\
    Argmax   & --- & 15.9\% & 12.4\% & 0.062 \\
    Closed-CP~\citep{sundarsingh2026safe} & 80\% & 12.4\% & 11.2\% & 0.065 \\
    Uniform dilation & --- & 9.1\% & 9.1\% & 0.055\\
    \midrule
    Offline-CP & 20\% & 12.7\% & 12.4\% & 0.060 \\
    Offline-CP & 40\% & 13.9\% & 11.5\% & 0.073 \\
    Offline-CP & 60\% & 10.9\% & 10.9\% & 0.056 \\
    Offline-CP & 80\% & 12.1\% & 10.0\% & 0.065 \\
    \midrule
    CoSMO & 20\% & 12.1\% & 9.1\% & 0.059 \\
    CoSMO & 40\% & 12.7\% & 9.4\% & 0.068 \\
    CoSMO & 60\% & 12.1\% & 10.0\% & 0.067 \\
    CoSMO & 80\% & 12.4\% & 8.2\% & 0.064 \\
    \midrule
    CoSMO(cal.\ GDINO) & 80\% & 12.4\% & 13.0\% & 0.068 \\
    \bottomrule
    \end{tabular}
    \vspace{0.8em}
    \captionof{table}{Full results on 330 ablation episodes supporting \Cref{fig:pareto}. \textbf{Cov.} is the coverage target.}
    \label{tab:coverage_ablation}
\end{minipage}\hfill
\begin{minipage}[b]{0.46\columnwidth}
    \centering
    \includegraphics[width=\linewidth]{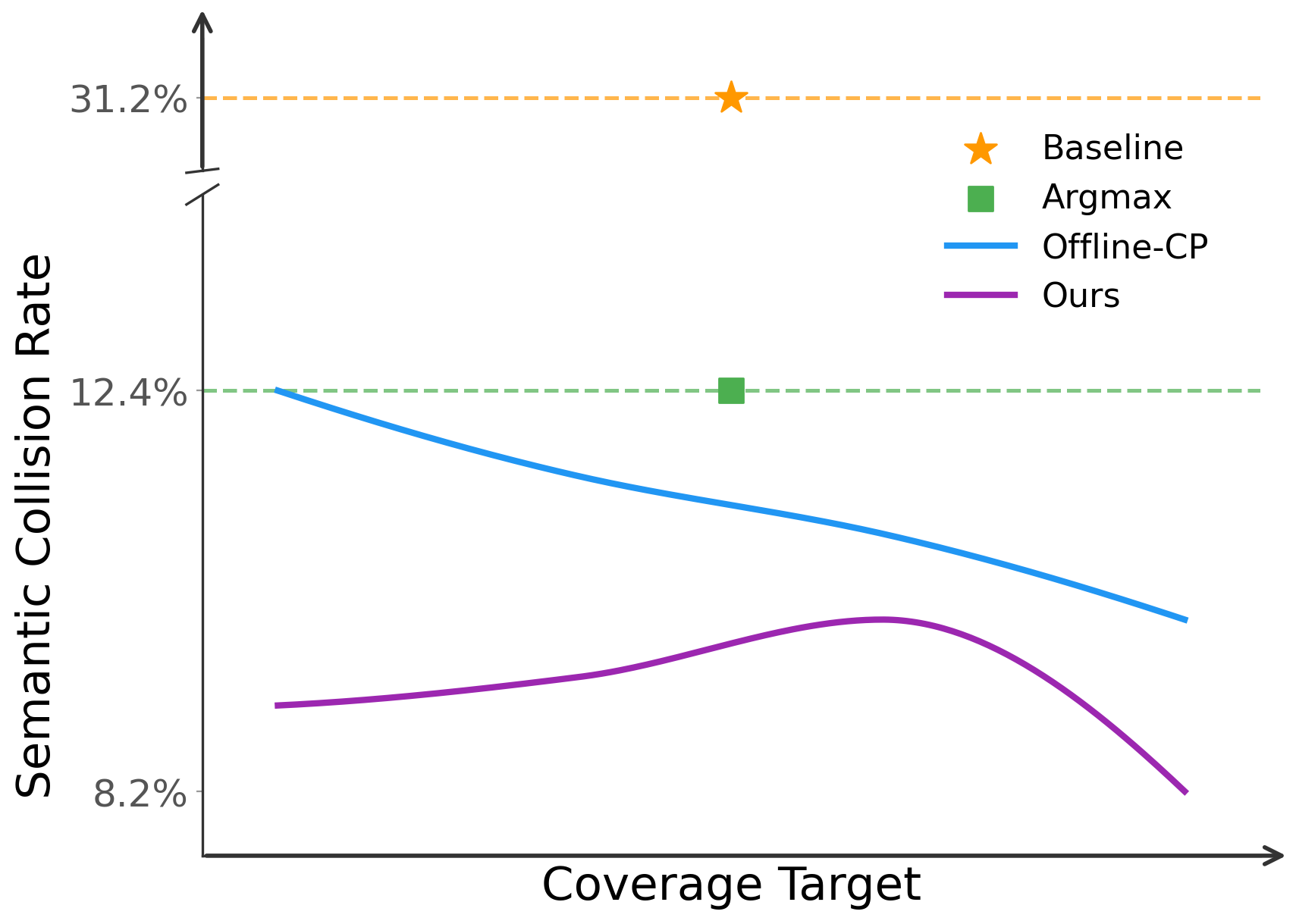}
    \vspace{0.3em}
    \captionof{figure}{SCR vs.\ coverage target on 330 ablation episodes. CoSMO achieves uniformly lower SCR than Offline-CP across all coverage targets. See \Cref{fig:pareto} for the corresponding Pareto frontier.}
    \label{fig:scr_coverage}
\end{minipage}
\end{table}

\Cref{tab:coverage_ablation} and \Cref{fig:scr_coverage} present the full coverage-target sweep on the 330-episode ablation split.
Two findings stand out.
\textbf{(i)~CoSMO dominates Offline-CP at every coverage target.} Across all four coverage levels ($20$--$80\%$), CoSMO achieves consistently lower SCR than Offline-CP while maintaining comparable SR and SPL. This confirms that online ACI calibration is strictly preferable to a fixed offline threshold, because it adapts to the non-stationary score distribution induced by dictionary expansion.
\textbf{(ii)~GDINO calibration incurs a modest safety cost.} CoSMO calibrated with Grounding-DINO at $80\%$ coverage reaches $13.0\%$ SCR, higher than its GT-calibrated counterpart ($8.2\%$). This gap is larger than on the main 2195-episode benchmark ($10.5\%$ vs.\ $9.8\%$, \Cref{tab:main_results}), likely because the smaller ablation split amplifies variance. Nevertheless, the GDINO variant still outperforms the Baseline ($31.2\%$) and Argmax ($12.4\%$) methods, confirming that a noisy but realistic detector remains a practical alternative to ground-truth supervision.
\subsection{Pre-discovery Safety}
\label{subsec:pre_discovery_safety}

Before a hazard category is discovered, its true label may be absent from the safety dictionary. For an empty coverage set $\mathcal{C}_j^{(t)}=\emptyset$, our default fallback is $d_{\mathrm{wc}}^{(j,t)}=0$, retaining geometric obstacle avoidance without additional semantic dilation. We compare this choice with conservative fallbacks of 2 and 5 cells on the 330-episode ablation split, with expert delays $\tau\in\{0,50\}$ (\Cref{tab:cold_start_fallback}). Across these settings, SCR ranges from $9.4\%$ to $12.1\%$, and neither safety nor success improves monotonically with the fallback radius. Only $0$--$6.45\%$ of semantic collisions occur before discovery, suggesting that the fallback has a limited empirical effect in this evaluation setting.

\begin{table}[t]
    \centering
    \caption{Cold-start fallback on 330 MP3D ablation episodes. Fallback radii are in grid cells ($0.1$\,m/cell). Pre-disc.\ and Post-disc.\ report semantic collision counts and their percentages of all semantic collisions in each setting.}
    \label{tab:cold_start_fallback}
    \small
    \setlength{\tabcolsep}{3pt}
    \begin{tabular}{@{}ccrrrrrr@{}}
        \toprule
        $\tau$ & Fallback $d_{\mathrm{wc}}$ & SR(\%) & SCR(\%) & SPL & STUCK(\%) & Pre-disc. & Post-disc. \\
        \midrule
        0  & 0 & 12.7 & 10.3 & 0.070 & 42.7 & 0 (0.0\%) & 34 (100\%) \\
        0  & 2 & 11.2 & 12.1 & 0.069 & 43.0 & 0 (0.0\%) & 40 (100\%) \\
        0  & 5 & 13.0 & 10.9 & 0.069 & 43.6 & 0 (0.0\%) & 36 (100\%) \\
        \midrule
        50 & 0 & 12.4 & 9.4 & 0.068 & 46.4 & 2 (6.45\%) & 29 (93.6\%) \\
        50 & 2 & 13.3 & 9.4 & 0.071 & 43.9 & 2 (6.45\%) & 29 (93.6\%) \\
        50 & 5 & 12.7 & 11.8 & 0.063 & 45.5 & 2 (5.13\%) & 37 (94.9\%) \\
        \bottomrule
    \end{tabular}
\end{table}

\paragraph{Theoretical implication.}
Choosing $d_{\mathrm{wc}}^{(j,t)}=\bar D$ for empty coverage sets, with $\bar D$ upper-bounding the safety distances of all categories including undiscovered ones, guarantees $d_{\mathrm{wc}}^{(j,t)}\ge D_{\mathrm{safe}}(l_j^{(t)})$ for those cells without requiring label discovery. The post-discovery assumption can therefore be relaxed to require $l_j^{(t)}\in\mathcal{V}_t$ only for cells with nonempty coverage sets. Under this weaker dictionary condition, $S_t\le Q_t$ ensures safety in both cases: empty sets are protected by the fallback, and nonempty sets by conformal coverage and worst-case dilation. Thus, the safety violation event still satisfies $V_t\subseteq\{S_t>Q_t\}$, preserving the bounds $\alpha_t'$ in Theorem~\ref{thm:safety} and $\alpha+\tau\Delta+2\epsilon$ in Corollary~\ref{cor:mission}, with all other assumptions unchanged. This extension covers empty-set cells before discovery; undiscovered categories yielding nonempty sets remain outside the guarantee.

We use fallback $0$ in the main experiments to isolate the effect of ACI-calibrated semantic dilation.

\newpage
\section{Notation}
\label{app:notation}

\begin{table}[H]
\caption{Variable definitions.}
\label{tab:notation}
\centering
\small
\begin{tabular}{@{}l l@{}}
\toprule
\textbf{Symbol} & \textbf{Description} \\
\midrule
\multicolumn{2}{@{}l}{\textit{Environment and Robot}} \\
$\Omega \subset \mathbb{R}^2$ & Workspace domain \\
$x_t \!\in\! \mathcal{X} \!\subset\! \mathbb{R}^{n_x}$ & Robot state at time step $t$ \\
$u_t \!\in\! \mathcal{U} \!\subset\! \mathbb{R}^{n_u}$ & Control input \\
$(I_t, Z_t)$ & RGB-D observation: $I_t \!\in\! \mathbb{R}^{H \times W \times 3}$, $Z_t \!\in\! \mathbb{R}^{H \times W}$ \\
$\mathcal{J}$ & Set of indices for the grid map cells \\
$c_j \in \Omega$ & Center position of grid cell $j \in \mathcal{J}$ \\
$\delta$ & Single-step displacement bound \\
\midrule
\multicolumn{2}{@{}l}{\textit{Open-Vocabulary Semantics and Safety}} \\
$\mathcal{V}_t$ & Open-vocabulary semantic label set at time $t$ \\
$D_{\text{safe}}\!:\!\mathcal{V}_t \!\to\! \mathbb{R}_{\geq0}$ & Safety distance function \\
$\bar{D}$ & Upper bound: $\bar{D} \ge \sup_{l \in \mathcal{V}} D_{\text{safe}}(l)$ \\
$\phi\!:\!\mathbb{R}^{H \times W \times 3} \!\to\! \mathbb{R}^{H' \times W' \times d_{\text{em}}}$ & Dense VLM image encoder \\
$\psi\!:\!\mathcal{V} \!\to\! \mathbb{R}^{d_{\text{em}}}$ & VLM text encoder \\
$\mathbf{v}_j^{(t)} \in \mathbb{R}^{d_{\text{em}}}$ & Visual embedding for cell $j$ at time $t$ \\
$\mathbf{v}_{\text{cal},j} = \psi(l_j)$ & Text embedding of label $l_j$ \\
$f_{\text{expert}}$ & Delayed expert oracle; delay $\tau$ \\
\midrule
\multicolumn{2}{@{}l}{\textit{Neighborhood and Nonconformity Scores}} \\
$r$ & Neighborhood radius; $r \ge \bar{D} + \delta$ \\
$\mathcal{N}_r(x)$ & $\{j \in \mathcal{J} : \|x - c_j\| \le r\}$ \\
$s(\mathbf{v}, \mathbf{v}_{\text{cal}})$ & Pairwise nonconformity score; $s\!:\!\mathbb{R}^{d_{\text{em}}} \!\times\! \mathbb{R}^{d_{\text{em}}} \!\to\! \mathbb{R}_{\ge 0}$ \\
$S_t$ & Neighborhood NCS: $\max_{j \in \mathcal{N}_r(x_t)} s(\mathbf{v}_j^{(t)}, \mathbf{v}_{\text{cal},j}^{(t)})$ \\
\midrule
\multicolumn{2}{@{}l}{\textit{Adaptive Conformal Prediction}} \\
$Q_t$ & Adaptive nonconformity threshold at time $t$ \\
$\mathcal{C}_j^{(t)} \!\subseteq\! \mathcal{V}_t$ & Coverage set: $\{ l \!\in\! \mathcal{V}_t \mid s(\mathbf{v}_j^{(t)}, \psi(l)) \!\le\! Q_t \}$ \\
$\alpha,\;\alpha_t$ & Target / adaptive miscoverage rate \\
$\gamma$ & ACI step size \\
\midrule
\multicolumn{2}{@{}l}{\textit{Safety-Critical Planning}} \\
$d_{\text{wc}}^{(j)}$ & Worst-case distance: $\max_{l \in \mathcal{C}_j^{(t)}} D_{\text{safe}}(l)$ \\
$\mathcal{X}_{\text{safe}}^{(t)}$ & Safe configuration space \\
\bottomrule
\end{tabular}
\end{table}

\end{document}